\titleformat*{\paragraph}{\bfseries}
\pgfplotsset{compat=1.17}
\definecolor[named]{ACMBlue}{cmyk}{1,0.1,0,0.1}
\definecolor[named]{ACMYellow}{cmyk}{0,0.16,1,0}
\definecolor[named]{ACMOrange}{cmyk}{0,0.42,1,0.01}
\definecolor[named]{ACMRed}{cmyk}{0,0.90,0.86,0}
\definecolor[named]{ACMLightBlue}{cmyk}{0.49,0.01,0,0}
\definecolor[named]{ACMGreen}{cmyk}{0.20,0,1,0.19}
\definecolor[named]{ACMPurple}{cmyk}{0.55,1,0,0.15}
\definecolor[named]{ACMDarkBlue}{cmyk}{1,0.58,0,0.21}
\crefname{ineq}{Inequality}{Inequality}
\crefname{sub}{Subsection}{Subsection}
\crefname{sdp}{SDP}{SDP}
\crefname{lp}{LP}{LP}
\crefname{ineq}{Inequality}{Inequality}
\crefname{sub}{Subsection}{Subsection}
\crefname{sdp}{SDP}{SDP}
\crefname{lp}{LP}{LP}
\newtheorem{theorem}{Theorem}[section]
\newtheorem{lemma}[theorem]{Lemma}
\newtheorem{informal theorem}[theorem]{Theorem (informal statement)}
\newtheorem{claim}[theorem]{Claim}
\newtheorem{fact}[theorem]{Fact}
\newtheorem{remark}[theorem]{Remark}
\newtheorem{definition}[theorem]{Definition}
\newcommand\norm[1]{\left\| #1 \right\|}
\renewcommand\vec[1]{\mathbf{#1}}
\DeclareMathOperator*{\E}{\mathbb{E}}
\newcommand{\supp}{\mathsf{supp}}
\newcommand{\rel}{\mathrm{Rel}}
\def\multichoose#1#2{\ensuremath{\left(\kern-.3em\left(\genfrac{}{}{0pt}{}{#1}{#2}\right)\kern-.3em\right)}}
\newcommand{\D}{\mathcal{D}}
\newcommand{\di}{\partial_{i}}
\newcommand{\R}{\mathbb{R}}
\newcommand{\poly}{\mathrm{poly}}
\newcommand{\cube}[1]{\{0,1\}^{#1}}
\newcommand{\Ind}{\mathds{1}}
\newcommand{\1}{\Ind}
\newcommand{\x}{\vec x}
\crefname{algocf}{Algorithm}{Algorithms} \Crefname{algocf}{Algorithm}{Algorithms}
\newcommand{\unif}[1]{\mathrm{Unif}([#1])}
\newcommand{\Dmarginal}{\D_{\x}}
\newcommand{\codeStyle}[1]{{\bfseries #1} }
\newcommand{\codeInput}{\codeStyle{Input:}}	
\newcommand{\codeReturn}{\codeStyle{Return:}}
\author{Gautam Chandrasekaran\thanks{\texttt{gautamc@cs.utexas.edu}. Supported by the NSF AI Institute for Foundations of Machine Learning (IFML).}\\
UT Austin
\and
Adam R. Klivans\thanks{\texttt{klivans@cs.utexas.edu}. Supported by the NSF AI Institute for Foundations of Machine Learning (IFML).}\\
UT Austin
}
\date{}
\title{Learning Juntas under Markov Random Fields}
\begin{document}

\maketitle

\begin{abstract}
        We give an algorithm for learning $O(\log n)$ juntas in polynomial-time with respect to Markov Random Fields (MRFs) in a smoothed analysis framework where only the external field has been randomly perturbed.  This is a broad generalization of the work of Kalai and Teng, who gave an algorithm that succeeded with respect to smoothed {\em product} distributions (i.e., MRFs whose dependency graph has no edges). Our algorithm has two phases: (1) an unsupervised structure learning phase and (2) a greedy supervised learning algorithm.
        This is the first example where algorithms for learning the structure of an undirected graphical model lead to provably efficient algorithms for supervised learning. 
\end{abstract}
\newpage
\section{Introduction}
A function $f:\cube{n}\to\cube{}$ is a $k$-junta if it depends only on $k$ of the $n$ input coordinates.  The junta learning problem, introduced by Blum and Langley \cite{blum1994relevant,blumlangley} in 1994, is as follows: given random samples labeled by an unknown $k$-junta, output a classifier that closely approximates the $k$-junta.  Learning $k$-juntas has been one of the most well-studied problems in computational learning theory over the last three decades.  It is a notoriously difficult challenge to learn juntas with runtime and sample complexity $n^{o(k)}$ and has important applications in pseudorandomness and cryptography (e.g., \cite{ABWjunta}).    

The problem of learning juntas illustrates the difficulty of designing learning algorithms that can succeed in the presence of a large number of irrelevant features (i.e., the $n-k$ irrelevant coordinates).  Most prior work has focused on learning juntas when the marginal distribution is uniform over the hypercube. Observe that a brute force search over all subsets of $k$ variables suffices but results in a running time of $n^{O(k)}$.  State of the art results due to \cite{mosjunta} and \cite{valiant2012finding} give algorithms that run in time approximately $n^{0.7k}$ and $n^{0.6k}$ respectively when given uniform random examples. There is strong evidence to suggest that a runtime of $n^{\Omega(k)}$ is unavoidable, as there are well-known statistical query (SQ) lower bounds \cite{blumSQ} and also cryptographic lower bounds \cite{ABWjunta}.  Finding an algorithm for learning juntas with run time and sample complexity $n^{o(k)}$ with respect to the uniform distribution remains a major open problem. 

\subsection{Beyond the Worst Case: Smooth Product Distributions}
To bypass the above hardness results, learning juntas has been studied in the smoothed analysis framework of Spielman and Teng  \cite{st04}. In particular, Kalai and Teng \cite{kalai2008decision} introduced the notion of a $c$-smooth product distribution, which is a product distribution with mean vector of the form $\mu=\bar{\mu}+\Delta$ where $\bar{\mu}$ is adversarially chosen and $\Delta$ is randomly sampled from the uniform distribution on $[-\sigma,\sigma]^n$. Under these smoothed distributions (with high probability over the smoothing), they showed that it is possible to learn depth $k$ decision trees in time $\poly(2^{k},n)$\footnote{A similar statement for juntas was also observed in \cite{mosjunta}, see Fact 15}. 

The main takeaway from their result is that if the marginal distribution is a slightly perturbed version of the uniform distribution, the task of junta learning becomes easy. This suggests that the lower bound for learning juntas is extremely brittle and that juntas are efficiently learnable over most product distributions. 

A major drawback of all the aforementioned results is that they require the marginal distribution to be product. It is not clear how realistic this assumption is, as most real world data has interdependencies between variables. In \cite{kalai_teng}, the authors  asked if the smoothed analysis paradigm could be extended beyond product distributions. 

We answer this question positively, and our main contribution is an efficient algorithm for learning $O(\log n)$ juntas with respect to Markov Random Fields (MRFs) with $O(\log n)$-degree dependency graphs and smoothed external fields.  This is a broad generalization of the Kalai et al. result, as (smoothed) product distributions correspond to trivial MRFs where the underlying dependency graph has no edges.

\subsection{Beyond Products Distributions: Markov Random Fields}
The class of distributions we study are undirected graphical models, also known as Markov Random Fields. These models-- most famously the Ising model-- have played a central role in probabilistic modeling and statistical physics.
\begin{definition}[Undirected Graphical Model]
\label{defn:undirected_graph_model}
     An undirected graphical model $D$ with dependency graph $G$ is a probability distribution over $\cube{n}$ such that for $X\sim D$, $X_{i}$ is conditionally independent of the remaining coordinates of $X$ when the conditioning is  on $\{X_{j}\mid (i,j)\text{ is an edge in $G$}\}$. 
\end{definition}

By the Hammersley-Clifford Theorem \cite{hammersley-clifford}, every distribution satisfying the above property (with the additional assumption that the density is positive everywhere) has a density function of the following form:
\begin{equation}
\label{eqn:mrf_pdf}
\Pr_{X\sim D}[X=x]\propto \exp\left(\sum_{S\in C(G)}\psi_S(x)\right)=\exp\left(\psi(x)\right)
\end{equation}
where $t\in [n]$, $C(G)$ is the set of cliques of $G$ and $\psi_S$ are functions that only depend on the coordinates of $x$ in $S$. Any distribution of the above form is called a Markov Random Field. The factorization $\psi$ of $D$ is a polynomial of degree at most $d$ where $d$ is the degree of $G$. The famous Ising model corresponds to the case when the degree of $\psi$ is two. The linear part of the polynomial $\psi$ is called the \textit{external field}. 

As mentioned above, these distributions strictly generalize product distributions. A product distribution is a graphical model where the graph contains only isolated vertices. The uniform distribution corresponds to an MRF with factorization $\psi=0$. An arbitrary product distribution corresponds to an MRF with a linear function as the factorization.

Note that again, a brute force algorithm running in time $n^{O(k)}$ exists for learning juntas over MRF distributions. The question we investigate is if runtimes of the form $\poly(2^k,n)$ are possible for perturbed versions of these distributions. We show that this is indeed the case for the following notion of $(\lambda,\sigma)$-smooth MRFs where the external field of an adversarially chosen MRF is perturbed.
\begin{definition}[($\sigma,\lambda)$-smooth MRF]
    \label{defn:smooth_mrf}
    Let $\lambda\in \R$ and $\sigma\in (0,1/2)$. A distribution $D$ is a $(\sigma,\lambda)$-smooth MRF if the factorization polynomial of $D$, denoted by $\psi$ is of the form 
    \[
    \psi(x)\coloneq \bar{\psi}(x)+\sum_{i=1}^{n}\Delta_ix_i
    \]
    where $\Delta_i=\log (1+\alpha_i)$ for iid $\alpha_i\sim \unif{-\sigma,\sigma}$ and $\norm{\di \bar{\psi}}_1\leq \lambda$ for all $i\in [n]$.
\end{definition}

In the above definition, $\bar{\psi}$ is the factorization of the adversarially chosen MRF and the upper bound on the norm of its derivatives is a standard assumption. This can be interpreted as a multiplicative perturbation of the density function, as we have that 
\[\Pr_{X\sim D}[X=x]\propto\Pr_{X\sim \bar{D}}[X=x]\prod_{i\in [n]}(1+\alpha_ix_i)\]
where $\bar{D}$ is the MRF with factorization $\bar{\psi}$. We note that {\em we only perturb the external field} (as compared to perturbing all coefficients) of the adversarially chosen factorization polynomial in this model. We show that this mild perturbation is sufficient for efficient learnability. We also note that the bound $\norm{\partial_i\bar{\psi}}_1\leq \lambda$ is a natural non-degeneracy condition and is a generalization of the condition of $p$-biasedness which prior work \cite{kalai2008decision,kalai_teng,id3junta} assume for smooth product distributions. MRFs with such a bound on their derivatives are sometimes referred to as \textit{bounded width} models and are exactly the class of MRFs for which efficient structure learning (dependency graph recovery) results are possible \cite{Bresler15,sparsitron,hkm17,VuffrayMLC16,santhanam2012information}.


\subsection{Our Results}
We now state our theorem on junta learning over smooth MRFs:
\begin{theorem}
    \label{thm:informal}
    Let $\D$ be a labelled distribution such that the marginal distribution is a $(\lambda,\sigma)$-smooth MRF with a known dependency graph $G$ of degree at most $d$ and the labelling function is a $k$-junta. Then, \Cref{alg:learn_junta} run with $N=\Omega(\poly(\log n,\exp(\lambda (d+k)),2^{d+k},\sigma^{-k},1/\delta))$ samples from $\D$, graph $G$ and appropriately chosen threshold will run in time at most $\poly(n,N)$ and learn the junta exactly, with probability at least $1-\delta$ over the samples and smoothing of $\D$. In particular, for $d,k\leq O(\log n)$ and $\lambda,\sigma=O(1)$, the algorithm runs in polynomial time. 
\end{theorem}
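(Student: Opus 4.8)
The plan is to realize the algorithm as the two phases advertised above. In the unsupervised phase we use only marginal samples from the $(\sigma,\lambda)$-smooth MRF $\Dmarginal$ together with the known dependency graph $G$: for every vertex $i$ and every pattern $z\in\cube{|N(i)|}$ of its at most $d$ neighbors we estimate the conditional bias $p_i(z):=\Pr_{\x\sim\Dmarginal}[x_i=1\mid \x_{N(i)}=z]$. The bounded-width hypothesis $\norm{\di\bar\psi}_1\le\lambda$ (together with $|\Delta_i|=O(\sigma)=O(1)$) forces $p_i(z)\in[e^{-O(\lambda)},1-e^{-O(\lambda)}]$ and every reachable pattern to satisfy $\Pr[\x_{N(i)}=z]\ge e^{-O(\lambda d)}$, so a Chernoff bound shows that $\poly(2^{d},e^{\lambda d},1/\delta)$ samples suffice to estimate all of these accurately. (When $G$ is not given one instead runs an off-the-shelf bounded-width structure-learning routine; this is the sense in which structure learning drives the supervised step.)

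The supervised phase is a greedy search for the relevant set $J$ of the junta, built around a local conditional covariance test. For a coordinate $i$, set $N:=N(i)$, and for a pattern $z\in\cube{|N|}$ define
\[
\tau_i(z)\;:=\;\E_{(\x,y)\sim\D}\!\big[\,y\,x_i\mid \x_N=z\,\big]\;-\;\E[y\mid \x_N=z]\cdot\E[x_i\mid \x_N=z].
\]
Because $G$ is the dependency graph, $x_i$ is conditionally independent of $\x_{[n]\setminus(N\cup\{i\})}$ given $\x_N$; writing $A:=J\cap N$, $B:=J\setminus(N\cup\{i\})$ and $D_i\concept$ for the discrete derivative of $\concept$ in direction $i$, one computes the identity
\[
\tau_i(z)\;=\;p_i(z)\,\big(1-p_i(z)\big)\cdot\E\big[\,D_i\concept(z_A,\x_B)\mid \x_N=z\,\big].
\]
If $i\notin J$ then $D_i\concept\equiv0$ and hence $\tau_i(z)=0$ for every $z$, so the test never has a false positive — even though the graph may make $x_i$ heavily correlated with the label. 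If $i\in J$ then $D_i\concept\not\equiv0$, so there is a choice of $z$ (namely of the bits $z_A$) for which $D_i\concept(z_A,\cdot)$ is a nonzero function of $\x_B$; the algorithm certifies $i$ as relevant from a large empirical estimate of $\tau_i$ on such a $z$. The search is greedy: having found some $T\subseteq J$, it reruns the test for each candidate $i$ after additionally conditioning on $\x_T$ (which only shrinks $B$); once $J$ is recovered exactly, the junta is read off by a coupon-collector argument, since each pattern of $\x_J$ has probability $\ge e^{-O(\lambda k)}$.

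The crux, and the place where the smoothing is essential, is the completeness direction: for every relevant $i$ the statistic $\max_z|\tau_i(z)|$ must exceed the threshold of the theorem with high probability over the external-field perturbation. Without smoothing, $\E[D_i\concept(z_A,\x_B)\mid \x_N=z]$ can vanish for all $z$ by cancellation (the parity obstruction). Using the multiplicative form $\Pr_{\Djoint}[X=x]\propto\Pr_{\bar D}[X=x]\prod_j(1+\alpha_j x_j)$ of the perturbed density, conditioning on $\x_N=z$ leaves the factors $\prod_{j\in B}(1+\alpha_j x_j)$ intact, so after clearing the (partition-function) denominator $\E[D_i\concept(z_A,\x_B)\mid \x_N=z]$ becomes a multilinear polynomial in the independent perturbations $\{\alpha_j\}_{j\in B}$ of degree at most $|B|\le k$; the linear map taking the function $x_B\mapsto D_i\concept(z_A,x_B)\cdot q_z(x_B)$ (with $q_z$ the conditional law of $\x_B$ given $\x_N=z$) to the coefficient vector of this polynomial is a zeta transform and hence invertible, so the polynomial is not identically zero and moreover some coefficient has magnitude $\ge e^{-O(\lambda k)}2^{-O(k)}$ (using that $\bar D$ has everywhere-positive density). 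What remains is a quantitative anticoncentration estimate for this polynomial over the uniform perturbations, bounding the probability that it lands near $0$ in terms of $\sigma$, the degree, and that coefficient. Since the degree can be as large as $\Theta(k)$, the naive anticoncentration bound is too lossy to stay polynomial, and one wants to exploit either the special structure of the polynomials arising from juntas or a favorable greedy order (for instance, that the total degree-one Fourier weight of a nonconstant junta cannot be too small, so that at every stage some not-yet-found relevant variable carries a provably large signal). This is the main obstacle. Granting it, combining the resulting lower bound on $|\tau_i(z)|$ with a Chernoff bound making the empirical conditional covariances accurate to within half the threshold — for which $N$ as in the theorem suffices — and a union bound over the at most $k$ relevant coordinates and at most $2^d$ patterns, the greedy search recovers $J$, and hence $\concept$, exactly with probability at least $1-\delta$.
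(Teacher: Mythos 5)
Your statistic $\tau_i(z)$ is exactly the conditional covariance $I(i,\rho)$ that the paper uses, your soundness argument (Markov property forces $\tau_i\equiv 0$ for irrelevant $i$) matches the paper's \Cref{claim:inf_zero}, and your multiplicative-density rewrite that turns $\E[D_i f \mid \x_N = z]$ into a multilinear polynomial in the perturbations $\alpha_j$ with some coefficient of magnitude $e^{-O(\lambda k)}2^{-O(k)}$ (via the "zeta transform" observation) is precisely the computation in \Cref{claim:inf_lb_relevant}. The place where you go off track is the last step: you declare the anticoncentration bound "too lossy to stay polynomial" because the polynomial has degree up to $k$, and you propose a greedy refinement and appeals to Fourier-weight structure that the paper does not use. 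This reflects a misreading of the target sample complexity. The theorem already permits $N = \Omega(\poly(\ldots, \sigma^{-k}, 2^{d+k}, \exp(\lambda(d+k)), \ldots))$; exponential dependence on $k$ is fine, and polynomiality in $n$ is recovered only in the regime $k,d = O(\log n)$ and $\sigma,\lambda = O(1)$ where $\sigma^{-k} = n^{O(\log(1/\sigma))}$. So one simply applies the anticoncentration lemma (\Cref{lem:anticoncentration}) once, to the degree-$\le k$ polynomial with a coefficient of size $\geq e^{-\lambda k}/8^k$, and with probability $1-\gamma$ gets a threshold of order $\gamma^2(\sigma e^{-\lambda}/16)^{k+2}$ — exactly the loss the theorem tolerates. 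No greedy ordering, no special junta-polynomial structure, and no Fourier-weight argument are needed; the algorithm is a single pass testing all $n$ coordinates against all $\le 2^d$ neighbor-restrictions, followed by brute-force ERM on the $2^k$ truth-table entries of the recovered relevant set (using unbiasedness so all entries appear among $\exp(O(\lambda k))$ samples).

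A secondary imprecision: you say the final union bound is over "the at most $k$ relevant coordinates and at most $2^d$ patterns," but the empirical concentration step must control the estimator on all $n$ coordinates (to keep the empirical statistic below threshold for the $n-k$ irrelevant ones whose true value is exactly zero), so the union bound is over $n\cdot 2^d$ pairs, contributing the $\log n$ factor in the sample complexity.
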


This addresses an open question raised in \cite{kalai_teng} about extending their smoothed analysis framework beyond product distributions. 

\begin{remark}
    In the setting where the dependency graph $G$ is not known, one can first recover $G$ using existing structure learners for bounded-degree MRFs \cite{sparsitron,hkm17} and then apply our algorithm. The sample complexity and run-time of these algorithms are $\poly(n^{t},2^{\lambda})$, where $t$ is the degree of the factorization polynomial of the MRF. This preprocessing is required only once and can be reused across multiple supervised learning tasks with respect to the same distribution. We note that to the best of our knowledge, this is the first example where algorithms for structure learning graphical models result in efficient supervised learning algorithms. 
\end{remark}
    
\subsection{Related Work}
\paragraph{Learning and Testing Juntas} The problem of learning juntas was introduced by Blum and Langley \cite{blum1994relevant,blumlangley} in 1994. The first non-trivial algorithm for learning over the uniform distribution was given by Mossel, O'Donnell, and Servedio \cite{mosjunta} who improved the naive $n^{O(k)}$ runtime from exhaustive search to roughly $n^{0.7 k}$. This was improved by Valiant \cite{valiant2012finding} to approximately $n^{0.6k}$. The $n^{\Omega(k)}$ runtime is widely believed to be optimal for uniform distribution learning, as there are statistical query and cryptographic lower bounds \cite{kearns93,ABWjunta}. Another well studied problem related to juntas is that of junta testing. Here, the goal is to identify if an input function is a junta, or far from one, where the algorithm is given query access. The first algorithm was given by Parnas, Ron and Samorodnitsky \cite{dictatorTesting} where they give an algorithm for dictator ($1$-junta) testing. The first algorithm for $k$-junta testing was given by Fischer et al. \cite{juntatesting} and later improved to almost optimal query complexity by Blais \cite{blais_juntatesting1,optimal_junta_testing}.
\paragraph{Smoothed Analysis and Learning}
The study of smoothed analysis of algorithms was initiated by Spielman and Teng \cite{st04} to theoretically study the empirical success of the Simplex algorithm which has exponential worst case run time. Their framework has subsequently been applied to various settings to analyze the good average case performance of various algorithms which have intractable worst case performance. Applying this framework to learning theory, Kalai and Teng \cite{kalai2008decision} gave a polynomial time algorithm for PAC learning $O(\log n)$-depth decision trees over smoothed product distributions. Kalai, Samorodnitsky and Teng \cite{kalai_teng} extended the idea to polynomial time algorithms for PAC learning DNFs and agnostically learning decision trees with random examples. Brutzkus, Daniely and Malach \cite{id3junta} proved that the empirically successful ID3 algorithm efficiently learns juntas over these distributions. More recent work applying the framework of smoothed analysis to learning theory include \cite{haghtalab2020smoothed,smoothed_adaptive,chandrasekaran2024smoothed}.
\paragraph{Learning from Random Walk Examples}
A complementary beyond-worst-case model for supervised learning is that of learning from random walk samples. This was introduced by Bshouty et al. \cite{learning_random_walk_dnf} where they give a polynomial time algorithm for PAC learning DNFs when given correlated samples corresponding to consecutive steps of an appropriate random walk over the cube whose stationary distribution is uniform. Algorithms for agnostic learning juntas in polynomial time in the random walk model were given by Arpe and Mossel \cite{arpe2008agnosticallylearningjuntasrandom}, and by Jackson and Wimmer \cite{jackson_wimmer_colt}. This framework was further generalized to MRFs by Kanade and Mossel \cite{mcmc_learning} where they give a polynomial time algorithm for learning $O(\log n)$-juntas but require correlated samples from a rapidly mixing Gibbs random walk whose stationary distribution is an MRF.  We stress that our algorithm requires only iid samples from the underlying MRF.  
\paragraph{Learning Markov Random Fields}
Starting with the work of Chow and Liu \cite{chow1968approximating} on learning tree Ising models, the problem of learning graphical models has been studied extensively \cite{wainwright2006high,abbeel2006learning,bresler2008reconstruction,netrapalli2010greedy,tandon2014learning}.  Bresler \cite{Bresler15} obtained the first efficient structure learning algorithm for Ising models over bounded degree graphs, although with suboptimal sample complexity.  This algorithm was generalized to higher order MRFs by Hamilton, Koehler and Moitra \cite{hkm17}.   Vuffray et al. \cite{VuffrayMLC16} gave the first algorithm for learning Ising models with near-optimal sample complexity but with suboptimal runtime. Klivans and Meka \cite{sparsitron} gave the first algorithm that achieves both near-optimal runtime and sample complexity. Other recent related works on structure learning MRFs in various settings include  \cite{wu2019sparse,goel2019learning,prasad2020learning,moitra2021learning,diakonikolas2021outlier, dagan2021learning, bhattacharyya2021near, gaitonde2024efficiently,ck24sk}.
\section{Preliminaries}
\label{sec:prelims}
\begin{definition}[$k$-junta]
\label{defn:junta}
    A function $f:\cube{n}\to {0,1}$ is said to be a $k$-junta if there exists a function $g:\cube{k}\to \cube{}$ and a set $S\subseteq [n]$ with $|S|=k$ such that $f(x)\coloneq g(x_{S})$.
\end{definition}
Given a graph $G$, we use $N_G(i)$ to denote the neighbours of $i$ in $G$. Given a distribution $D$ with random variable $x\sim D$, we use $\E_{x\sim D}[.]$ to denote expectations over these variables. We drop the distribution and random variable from the subscript when it is clear from context. Similarly, given a set $S$, we use $\E_{S}[.]$ to denote the expectation over the uniform distribution on the set $S$.

A useful property that we require is that of $\delta$-unbiasedness. 
\begin{definition}[Unbiased distributions]
    \label{defn:unbiasedness}
    Let $\delta\in [0,1]$. A distribution $D$ is said to be $\delta$-unbiased if for all $b\in\cube{}$, $i\in [n]$ and $x\in \cube{n-1}$, it holds that 
    \[
    \Pr_{X\sim D}[X_i=b\mid X_{-i}=x]\geq \delta
    \]
\end{definition}

\begin{fact}
\label{fact:pdf_ising}
    Let $D$ be an MRF with factorization polynomial $\psi$. Then, for $i\in [n]$ and $x\in \cube{n-1}$, it holds that $\Pr_{X\sim D}[X_i=1\mid X_{-i}=x]=\sigma(\di\psi(x))$.
\end{fact}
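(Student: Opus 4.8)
The plan is a short direct computation from the Hammersley--Clifford factorization \eqref{eqn:mrf_pdf}, whose only idea is that the unknown partition function cancels in the ratio defining a conditional probability. Write $Z \eqdef \sum_{y \in \cube{n}} e^{\psi(y)}$, so that $\Pr_{X \sim D}[X = y] = e^{\psi(y)}/Z$ for every $y \in \cube{n}$, and for $x \in \cube{n-1}$ and $b \in \cube{}$ let $x^{i \mapsto b} \in \cube{n}$ denote the point that agrees with $x$ on $[n] \setminus \{i\}$ and equals $b$ at coordinate $i$. Since the density is everywhere positive we have $\Pr_{X \sim D}[X_{-i} = x] \geq \Pr_{X\sim D}[X = x^{i\mapsto 0}] > 0$, so the conditioning is well defined, and
\[
\Pr_{X \sim D}[X_i = 1 \mid X_{-i} = x] = \frac{\Pr[X = x^{i \mapsto 1}]}{\Pr[X = x^{i \mapsto 0}] + \Pr[X = x^{i \mapsto 1}]} = \frac{e^{\psi(x^{i \mapsto 1})}}{e^{\psi(x^{i \mapsto 0})} + e^{\psi(x^{i \mapsto 1})}},
\]
the factor $1/Z$ having cancelled. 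Dividing numerator and denominator by $e^{\psi(x^{i \mapsto 1})}$ rewrites the right-hand side as $\bigl(1 + e^{\psi(x^{i \mapsto 0}) - \psi(x^{i \mapsto 1})}\bigr)^{-1}$.

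The remaining step is to identify the exponent with $\di\psi$. Taking $\psi$ to be multilinear without loss of generality (each $x_i^2$ may be replaced by $x_i$, which never raises the degree), write $\psi = x_i \cdot q(x_{-i}) + r(x_{-i})$; then $q = \di\psi$ is the formal partial derivative in $x_i$ and does not involve the coordinate $x_i$, so the notation $\di\psi(x)$ for $x \in \cube{n-1}$ is unambiguous and $\psi(x^{i \mapsto 1}) - \psi(x^{i \mapsto 0}) = q(x_{-i}) = \di\psi(x)$. Substituting gives $\Pr_{X \sim D}[X_i = 1 \mid X_{-i} = x] = \bigl(1 + e^{-\di\psi(x)}\bigr)^{-1} = \sigma(\di\psi(x))$, which is the claim.

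I do not expect any real obstacle: the statement is just the exponential-family / local Markov structure of an MRF made explicit, and the single point deserving a sentence of care is the convention for $\di\psi$ on the Boolean cube --- namely that the formal derivative of the multilinear representation coincides with the discrete difference $\psi(x^{i \mapsto 1}) - \psi(x^{i \mapsto 0})$ and depends on $x_{-i}$ only. (If samples lay in $\{\pm 1\}^n$ rather than $\cube{n} = \{0,1\}^n$, the identical computation would instead produce $\sigma(2\,\di\psi(x))$; under the present $\{0,1\}$ normalization no factor of two arises.)
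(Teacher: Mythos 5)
Your proof is correct. The paper states this as a Fact without giving a proof, and your argument is exactly the standard computation it implicitly relies on: conditioning on $X_{-i}=x$ leaves only the two completions $x^{i\mapsto 0},x^{i\mapsto 1}$, the partition function cancels in the ratio, and for the multilinear representation of $\psi$ one has $\psi(x^{i\mapsto 1})-\psi(x^{i\mapsto 0})=\partial_i\psi(x)$, which depends only on $x_{-i}$, giving $\sigma(\partial_i\psi(x))$. Your closing remark that the $\{\pm 1\}$ convention would instead produce $\sigma(2\,\partial_i\psi(x))$, while the $\{0,1\}$ normalization used here introduces no factor of two, is an accurate and useful clarification of the one point that genuinely needs care.
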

It is easy to see that the MRFs we consider in this paper are sufficiently unbiased (proof in \Cref{proof:smooth_unbiased}).
\begin{claim}
    \label{claim:smooth_unbiased}
    Let $D$ be a $(\sigma,\lambda)$-smooth MRF for $\lambda\in \R$ and $\sigma\in (0,1/2)$. Then, it holds that $D$ is $\frac{\exp({-\lambda})}{4}$-unbiased.
\end{claim}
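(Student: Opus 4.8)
The plan is to bound the conditional probabilities in \Cref{defn:unbiasedness} directly via the closed form from \Cref{fact:pdf_ising}. Fix $i\in[n]$ and $x\in\cube{n-1}$. By \Cref{fact:pdf_ising}, $\Pr_{X\sim D}[X_i=1\mid X_{-i}=x]=\sigma(\di\psi(x))$, and therefore $\Pr_{X\sim D}[X_i=0\mid X_{-i}=x]=1-\sigma(\di\psi(x))=\sigma(-\di\psi(x))$, where $\sigma$ is the (increasing) sigmoid. Hence both conditional probabilities are at least $\sigma(-|\di\psi(x)|)$, and it suffices to obtain a uniform deterministic upper bound on $|\di\psi(x)|$.

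Next I would split $\psi=\bar{\psi}+\sum_{j}\Delta_jx_j$ into the adversarial part and the perturbation and use linearity of the discrete derivative: $\di\psi(x)=\di\bar{\psi}(x)+\Delta_i$. For the adversarial term, $\di\bar{\psi}$ is a multilinear polynomial in the coordinates $x_{-i}$, each of whose monomials evaluates to a value in $\{0,1\}$ on the cube, so $|\di\bar{\psi}(x)|\le\norm{\di\bar{\psi}}_1\le\lambda$ by the non-degeneracy assumption of \Cref{defn:smooth_mrf}. For the perturbation term, $\Delta_i=\log(1+\alpha_i)$ with $\alpha_i\in[-\sigma,\sigma]$ and $\sigma\in(0,1/2)$, so $|\Delta_i|\le\log\frac{1}{1-\sigma}<\log 2$ with probability $1$ over the smoothing. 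Combining gives $|\di\psi(x)|\le\lambda+\log 2$.

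Finally I would plug this into the sigmoid bound. Note that $\norm{\di\bar{\psi}}_1\le\lambda$ forces $\lambda\ge 0$, so
\[
\sigma(-|\di\psi(x)|)\ \ge\ \sigma(-\lambda-\log 2)\ =\ \frac{1}{1+2e^{\lambda}}\ \ge\ \frac{1}{4e^{\lambda}}\ =\ \frac{\exp(-\lambda)}{4},
\]
using $1+2e^\lambda\le 3e^\lambda\le 4e^\lambda$ for $\lambda\ge 0$. Since $i$ and $x$ were arbitrary, this establishes $\frac{\exp(-\lambda)}{4}$-unbiasedness.

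I do not anticipate a genuine obstacle here; the only care needed is (i) using the one-sided bound $|\log(1+\alpha_i)|\le-\log(1-\sigma)$, which dominates $\log(1+\sigma)$, and (ii) checking the elementary numerical step $\frac{1}{1+2e^\lambda}\ge\frac{e^{-\lambda}}{4}$. In particular the bound on $|\Delta_i|$ holds surely (not just with high probability), so no concentration argument over the $\alpha_i$'s is required.
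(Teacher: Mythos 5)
Your proof is correct and follows essentially the same route as the paper's: both start from \Cref{fact:pdf_ising} to write the conditional probability as $\sigma(\di\psi(x))$, bound $|\di\psi(x)|$ by $|\di\bar\psi(x)|+|\Delta_i|\le\lambda+|\Delta_i|$, and use $\sigma\in(0,1/2)$ to control the perturbation term. The only cosmetic difference is in the last step: the paper lower bounds $\exp(-|\Delta_i|)=\min(\exp(\Delta_i),\exp(-\Delta_i))\ge 1-\sigma>1/2$ and multiplies through, whereas you absorb $|\Delta_i|<\log 2$ into the sigmoid argument and then verify $\frac{1}{1+2e^{\lambda}}\ge\frac{e^{-\lambda}}{4}$ for $\lambda\ge 0$; both yield the same constant.
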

The following is a useful consequence of the unbiasedness property (proof in \Cref{proof:lem_unbiased_conditional}). 
\begin{lemma}
    \label{lem_unbiased_conditional}
    Let $\delta\in (0,1)$. Let $D$ be a $\delta$-unbiased distribution. Then, for any sets $S,T\subseteq [n]$ such that $S\cap T=\phi$ and any $y\in \cube{|S|}, z\in \cube{|t|}$, it holds that
    $\Pr_{X\sim D}[X_{T}=t\mid X_S=s]\geq \delta^{|T|}$
\end{lemma}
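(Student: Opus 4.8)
The plan is to prove the inequality by the chain rule, decomposing the event $\{X_T = t\}$ coordinate by coordinate and lower-bounding each single-coordinate conditional probability by $\delta$. Write $T = \{i_1,\dots,i_m\}$ with $m = |T|$. Then
\[
\Pr_{X\sim D}[X_T = t \mid X_S = s] \;=\; \prod_{j=1}^{m} \Pr_{X\sim D}\big[\, X_{i_j} = t_{i_j} \;\big|\; X_S = s,\; X_{i_1} = t_{i_1},\dots,X_{i_{j-1}} = t_{i_{j-1}} \,\big],
\]
using that $S\cap T = \emptyset$, so that no coordinate is conditioned on twice and the decomposition covers exactly the $|T|$ coordinates of $T$. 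It therefore suffices to show that every factor on the right-hand side is at least $\delta$.

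Two preliminary points make this precise. First, $D$ has full support: if some $x\in\cube{n}$ had $\Pr[X=x]=0$, then by $\delta$-unbiasedness every single-coordinate neighbor of $x$ would also have probability $0$ (otherwise the corresponding conditional probability in \Cref{defn:unbiasedness} would be strictly positive, forcing $\Pr[X=x]>0$), and since the hypercube is connected under single-coordinate flips this would force $D\equiv 0$, a contradiction; hence every partial assignment, in particular $\{X_S=s\}$ and each conditioning event appearing above, has positive probability, so all conditional probabilities are well-defined. Second, \Cref{defn:unbiasedness} only bounds $\Pr[X_i=b\mid X_{-i}=x]$ when we condition on \emph{all} other coordinates, whereas each factor above conditions on a subset $W = S\cup\{i_1,\dots,i_{j-1}\}\subseteq [n]\setminus\{i_j\}$. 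But the subset version follows from the law of total probability: for any $i$, any $W\subseteq[n]\setminus\{i\}$, any $b\in\cube{}$, and any $w$ with $\Pr[X_W=w]>0$, the quantity $\Pr[X_i=b\mid X_W=w]$ is a convex combination, over the assignments $v$ to the remaining coordinates $[n]\setminus(W\cup\{i\})$, of the terms $\Pr[X_i=b\mid X_{-i}=(w,v)]$, each of which is at least $\delta$; since the mixing weights sum to $1$, this gives $\Pr[X_i=b\mid X_W=w]\ge\delta$.

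Combining the two points, each of the $m$ factors in the chain-rule product is a single-coordinate conditional probability of exactly the form covered by the subset version above, hence is at least $\delta$, and the product is at least $\delta^{m} = \delta^{|T|}$, which is the claim. I do not expect a genuine obstacle here: the only points requiring care are the reduction from conditioning on all coordinates (as in \Cref{defn:unbiasedness}) to conditioning on a subset, and checking that the conditioning events have positive probability, both of which are dispatched by the elementary observations above; the rest is a routine application of the chain rule.
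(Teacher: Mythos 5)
Your proof is correct and follows essentially the same route as the paper's: first reduce the single-coordinate conditional probability given a subset $W$ to the fully-conditioned quantity from \Cref{defn:unbiasedness} by the law of total probability, then apply the chain rule over the coordinates of $T$. The only difference is that you additionally verify full support of $D$ to ensure the conditional probabilities are well-defined, a point the paper leaves implicit.
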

\section{Algorithm and Analysis}
Throughout this section, let $f$ be the ground truth junta that depends on $k$ bits. Let $\Dmarginal$ denote the marginal distribution.  The joint distribution over features and labels, denoted by $\D$, is a distribution on $\cube{n}\times\cube{}$, where $(x,y)$ drawn from $\D$ is obtained by sampling $x$ from $\Dmarginal$ and setting $y = f(x)$.
\subsection{Prior Work: Learning over Smooth Product Distributions}
\label{subsec:smooth_product}
Before describing our algorithm for learning juntas over smooth MRFs, we first discuss the prior work on learning juntas over smooth product distributions \cite{kalai2008decision,mosjunta,id3junta} and explain how their algorithms work. Let $\Dmarginal$ be a product distribution with $\E[x_i]=\mu_i+\Delta_i$ for all $i$ where $\Delta_i$ is uniform in $[-\sigma,\sigma]$.  They first define the quantity $I(i)$ as $I(i)\coloneq |\E_{(x,y)\sim D}[yx_i]-\E_{(x,y)\sim D}[y]\E_{x\sim \Dmarginal}[x_i]|$ for any index $i\in[n]$. Observe that $f$ can always be uniquely expressed as $f(x)=x_{i}\cdot g_{i}(x_{-i})+h_{i}(x_{-i})$ where $g_i$ and $h_i$ depend on at most $k-1$ variables. Also, $x_i$ is a relevant variable if and only if $g_i$ is non-zero. They showed that 
\begin{align}
    \label{eqn:influence_product}
    I(i)&=|\E[yx_i]-\E[y]\E[x_i]|=\big|\E[x_{i}]\E[f(x)\mid x_{i}=1]-\E[f(x)]\E[x_{i}]\big|\nonumber\\
    &=\big|\E[x_{i}]\big|\cdot\big|\E[g_i(x_{-i})\mid x_{1}=1]\cdot(1-\E[x_{i}])+\E[h_i(x_{-i})\mid x_{i}=1]-\E[h_i(x_{-i})]\big|\nonumber\\
    &=\bigl|\E[x_i](1-\E[x_i])\cdot \E[g_i(x_{-i})]\bigl|=|\E[x_i](1-\E[x_i])|\cdot |g_i((\mu+\Delta)_{-i})|\,.
\end{align}
The first three equalities follow from the fact that $f(x)=x_i\cdot g_i(x_{-i})+h_i(x_{-i})$. The penultimate equality uses the fact that $x_i$ is independent from $x_{-i}$ and hence $\E[h_i(x_{-i})\mid x_i=1]=\E[h_i(x_{-i})]$. 
The last equality follows from treating the function $g_i$ as a polynomial of degree at most $k-1$ and using the product nature of the distribution to conclude that $\E[\prod_{i\in S}x_i]=\prod_{i\in S}(\mu_i+\Delta_i)$. Thus, to lower bound $I(i)$, it suffices to lower bound $g_{i}((\mu+\Delta)_{-i}))$. Clearly, we have that $I(i)=0$ for $i$ that is not relevant as $g_i$ is the zero polynomial. For relevant $i$, they used the following lemma from \cite{kalai2008decision} on anticoncentration of polynomials to show that $I(i)\geq \delta^2(\sigma)^{2k}$ with probability at least $1-\delta$ (for more details, see the proof of Lemma~11 in \cite{id3junta}). 

\begin{lemma}[Lemma~4 from \cite{kalai_teng}]
\label{lem:anticoncentration}
    Let $c,\sigma \in \R$. Let $p:\R^{n}\to \R$ be degree $\ell$ multilinear polynomial $p(x)\coloneq \sum_{|S|\leq \ell}\hat{p}(S)\prod_{i\in S}x_i$. Suppose there exists a set $S$ with $|S|=\ell$ and $|\hat{p}(S)|\geq c$. Then. for iid $x_i\sim \unif{-\sigma,\sigma}$, it holds that
    \[
    \Pr_{X\sim \mathrm{Unif}([-\sigma,\sigma]^n)}[|p(X)|\leq c\sigma^\ell\cdot \epsilon]\leq 2^{\ell}\sqrt{\epsilon}.
    \]
\end{lemma}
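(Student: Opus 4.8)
The plan is to induct on the degree $\ell$, proving the (formally stronger) bound $\Pr_{X}[|p(X)|\le c\sigma^\ell\epsilon]\le \min\{1,\,2^\ell\sqrt{\epsilon}\}$ for every $\epsilon>0$; one may assume $c,\sigma>0$, which is the regime in which the lemma is used. The base case $\ell=0$ is immediate: $p$ is then a constant of absolute value at least $c$, so the event is empty when $\epsilon<1$ and the bound is trivial when $\epsilon\ge 1$.

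For the inductive step I would fix an index $i\in S$ and decompose the multilinear polynomial as $p(x)=x_i\,q(x_{-i})+r(x_{-i})$, where $q=\partial_i p$ collects the monomials of $p$ containing $x_i$ (with $x_i$ divided out) and $r$ collects the rest. Then $q$ is multilinear of degree at most $\ell-1$, and since $S$ has maximal size, differentiation does not kill its coefficient: $\hat q(S\setminus\{i\})=\hat p(S)$, so $q$ has a coefficient of magnitude at least $c$ on a set of size $\ell-1$, and the inductive hypothesis applies to it. Next I would condition on $x_{-i}$: for fixed $x_{-i}$ the map $x_i\mapsto p(x)$ is affine with slope $q(x_{-i})$, so the elementary one-dimensional estimate $\Pr_{u\sim\unif{-\sigma,\sigma}}[|au+b|\le t]\le\min\{1,\,t/(\sigma|a|)\}$ gives $\Pr_X[|p(X)|\le t\mid x_{-i}]\le\min\{1,\,t/(\sigma|q(x_{-i})|)\}$. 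Averaging over $x_{-i}$ and rewriting the expectation of this truncated reciprocal as a layer-cake integral, $\E_{x_{-i}}\big[\min\{1,\tau/|q(x_{-i})|\}\big]=\int_0^1\Pr_{x_{-i}}\!\big[|q(x_{-i})|<\tau/u\big]\,du$ with $\tau=t/\sigma$. Setting $t=c\sigma^\ell\epsilon$ and applying the inductive bound on $q$ at the threshold $\tau/u=c\sigma^{\ell-1}(\epsilon/u)$ turns the right-hand side into at most $\int_0^1\min\{1,\,2^{\ell-1}\sqrt{\epsilon/u}\,\}\,du$.

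The proof then closes by evaluating this last integral. With $K\coloneqq 2^{\ell-1}\sqrt{\epsilon}$, splitting at $u=K^2$ gives $\int_0^1\min\{1,\,Ku^{-1/2}\}\,du=2K-K^2\le 2K$ when $K\le 1$, and the integral is at most $1\le 2K$ when $K>1$; in either case the bound is $2K=2^\ell\sqrt{\epsilon}$, completing the induction. The one point that has to be gotten right — and the reason the exponent in the statement is $1/2$ — is that the square root in the inductive hypothesis is exactly what makes $\int_0^1 u^{-1/2}\,du$ finite, so one level of the recursion costs only a multiplicative factor of $2$; a cruder case split on whether the slope $|q(x_{-i})|$ is ``large'' or ``small'' against a single threshold would instead propagate an $\epsilon^{\Theta(1/\ell)}$-type loss. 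Recognizing that one should integrate the inductive anticoncentration bound against all scales of $|q(x_{-i})|$, rather than dichotomize at one threshold, is the main (and only mildly subtle) obstacle; everything else is routine bookkeeping.
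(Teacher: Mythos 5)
The paper does not prove this lemma; it is quoted verbatim from \cite{kalai_teng} (their Lemma~4) as a black-box tool, so there is no in-paper proof to compare your argument against. That said, your proof is correct and self-contained: the decomposition $p(x)=x_i q(x_{-i})+r(x_{-i})$ with $i\in S$ correctly passes the extremal coefficient $\hat p(S)$ to $\hat q(S\setminus\{i\})$ because $|S|$ is the top degree; the one-dimensional estimate $\Pr_{u\sim\unif{-\sigma,\sigma}}[|au+b|\le t]\le\min\{1,t/(\sigma|a|)\}$ is right; the layer-cake identity $\E\bigl[\min\{1,\tau/|q|\}\bigr]=\int_0^1\Pr[|q|<\tau/u]\,du$ is valid since the integrand is bounded by $1$ (and handles $q=0$ gracefully); and the closing integral $\int_0^1\min\{1,Ku^{-1/2}\}\,du=2K-K^2\le 2K$ (for $K\le 1$, else $\le 1\le 2K$) gives exactly $2^{\ell}\sqrt\epsilon$. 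Your remark on why one must integrate the inductive bound over all scales of $|q(x_{-i})|$ rather than dichotomize at a single threshold is the genuinely non-routine observation here, and it is what preserves the $\sqrt\epsilon$ rate across levels of the induction.
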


Since $I(i)$ is sufficiently large for relevant indices, by taking empirical estimates of this statistic from $\poly((2/\sigma)^{k})$ samples, one can find the relevant variables. Then, given the set of $k$ relevant variables, even a brute-force algorithm (constructing a truth table for all $2^k$ possible combinations) has sample complexity and runtime that only scales with $\poly(n,(2/\sigma)^{k})$.

\subsection{Learning Over Smooth MRFs: Our Techniques}
The product structure was crucial in the previous subsection for two main reasons. First, it was used to argue that $\E[g_i(x_{-i})]$ is a polynomial in $\mu+\Delta$. Second, the independence of $x_{i}$ and $x_{-i}$ was essential in showing that $I(i)=0$ for irrelevant indices. The former was important to guarantee non-trivial correlation for relevant indices, while the latter was essential for distinguishing relevant from irrelevant variables. 

We now try to extend this approach beyond products. Henceforth, the marginal distribution $\Dmarginal$ that we consider is a $(\sigma,\lambda)$-smooth MRF (\Cref{defn:smooth_mrf}) with factorization $\psi$. 
By our definition of smooth MRFs, we will show with additional technical work that the first property ($\E[g_i]$ is a polynomial in perturbations) from earlier is still qualitatively true when we move beyond product distributions. In contrast, the second ($I(i)=0$ for irrelevant variables) is more fundamentally tied to product structure and does not extend to general distributions. Non-product distributions inherently allow correlations between variables, so an index that is irrelevant to a junta may still exhibit non-trivial correlation with the label. As a result, any algorithm that selects variables purely based on their correlation with the labels risks including irrelevant variables. To address this, we must go beyond the correlation statistic $I(i)$. 

Before we can go further, we need some additional notation. 
A restriction is a string $\rho\in \{0,1,*\}^{n}$. Let $\mathsf{supp}(\rho)$ denote the support of the restriction defined as $\mathsf{supp}(\rho)=\{i\in [n]\mid \rho_i\neq *\}$. The size of a restriction $\rho$ denoted by $|\rho|$ is the size of the set $\mathsf{supp}(\rho)$.
    Given a distribution $D$ on $\cube{n}$ and a restriction $\rho$, the restricted distribution $\D_{\rho}$ is obtained by conditioning $\D$ on the event $\{X_{i}=\rho_i\text{, for all $i\in \supp(\rho)$ }\}$. 
Similarly, given a set $S\subseteq\cube{n}$ and a restriction $\rho\in \{0,1,*\}$, we use $S_{\rho}$ to denote the set $S_{\rho}\coloneq \{x\in S\mid x_{i}=\rho_{i} \text{, for all }j\in \supp(\rho)$\}. 
Given a function $f:\cube{n}\to \cube{}$ and a restriction $\rho\in \{0,1,*\}^n$, the restricted function $f_{\rho}$ is defined as $f_{\rho}(x)\coloneq f(x_{\rho})$.

To go past the correlation statistic and design an algorithm for learning over MRFs, we use a key structural property of MRFs: the \textit{Markov} property. Recall from \Cref{defn:undirected_graph_model} that the variables $x_{i}$ and $x_{-i}$ are conditionally independent when conditioned on $x_{j\in N_G(i)}$ where $G$ is the dependency graph of the MRF. This property motivates the statistic of measuring correlation between $x_i$ and $y$ after conditioning by the neighbors of $i$. Formally, for $i\in [n]$ and restricting $\rho\in \{0,1,*\}^n$ with $\supp(\rho)=N_G(i)$, we define the statistic $I(i,\rho)$ as $I(i,\rho)\coloneq\big|\E_{\D_{\rho}}[yx_i]-\E_{\D_{\rho}}[y]\E_{\D_{\rho}}[x_i]\big|$ where $\D_{\rho}$ is the joint distribution conditioned on the event that $x_{\supp(\rho)}=\rho_{\supp(\rho)}$. Let $R$ be the set of relevant variables for $f$. We show the following properties for the statistic $I(i,\rho)$:
\begin{enumerate}
    \item for all $i\notin R$, for all restrictions $\rho$ with $\supp(\rho)=N_G(i)$, it holds that $I(i,\rho)=0$ (\Cref{claim:inf_zero}),
    \item for all $i\in R$, with probability $1-\gamma$ over the smoothing of $\Dmarginal$, there exists a restriction $\rho$ with $\supp(\rho)=N_G(i)$ such that $|I(i,\rho)|\geq \gamma^2\cdot \big((\sigma \exp(-\lambda)/16)^{k+2}\big)$ (\Cref{claim:inf_lb_relevant}).
\end{enumerate}
The first claim follows almost immediately from the Markov property. The second claim requires more technical work as the MRF density is quite complicated when compared to the product example sketched in \Cref{subsec:smooth_product}. The proofs of these claims are in \Cref{subsec:proofs}. 
Once we show these claims, the rest of the algorithm is almost immediate: we estimate these statistics empirically for all indices and all restrictions of their neighborhoods and pick the indices for which the statistic is large (\Cref{alg:find_relevant_variables}). We note that sampling from these conditional distributions (using straightforward rejection sampling) costs us an $\exp(\lambda d)$ factor in the sample complexity (where $d$ is the max degree of the dependency graph).

\begin{algorithm2e}[H]
	\caption{FindRelevantVariables$(S, G, \tau)$}
	\label{alg:find_relevant_variables}
	\DontPrintSemicolon
		\codeInput Sample set $S\subseteq\cube{n}\times \cube{}$, Dependency Graph $G$, Threshold $\tau$\;
        $\rel\gets \phi$
        
        \For{$i \in [n]$}{
        \For{$\rho\in \{0,1,*\}^n$ such that $\supp(\rho)= N_{G}(i)$\label{lineno:loop_restriction}}{
        $I_S(i, \rho)\gets \big|\E_{S_{\rho}}[yx_i]-\E_{S_{\rho}}[y]\E_{S_{\rho}}[x_i]\big|$\label{lineno:influence}
        \;

        \If {|$I_S(i,\rho)|> \tau$}{ 
        $\rel\gets\rel\cup\{i\}$
        
        }
        
        }
        }
        \codeReturn $\rel$\;
\end{algorithm2e}

\begin{algorithm2e}[H]
	\caption{LearnJunta$(S, G,\tau)$}
	\label{alg:learn_junta}
	\DontPrintSemicolon
		\codeInput Sample set $S\subseteq\cube{n}\times \cube{}$, Dependency Graph $G$, threshold $\tau$\;
        $\rel\gets$FindRelevantVariables$(S,G,\tau)$\label{lineno: rel}\;
        Find Empirical Risk Minimizer $\hat{f}$ out of all functions that only depend on $\rel$\;
        \codeReturn $\hat{f}$\;
\end{algorithm2e}
Finally, we run a brute force learner on these indices (\Cref{alg:learn_junta}).  This is where we use the fact that these distributions are unbiased (\Cref{claim:smooth_unbiased} and \Cref{lem_unbiased_conditional}). More specifically, the brute-force learner requires $O(\exp(\lambda k))$ samples (owing to unbiasedness) to see all possible fixings of the relevant indices. This implies our main theorem (proof in \Cref{proof:main_thm}). Note that the final hypothesis we output is exact and has zero error with high probability. 
\begin{theorem}
    \label{thm:main}
    Let $\D$ be a labelled distribution over $\cube{n}\times \cube{}$ such that $\Dmarginal$ is a $(\lambda,\sigma)$-smooth MRF with dependency graph $G$ of degree at most $d$ and the labelling function is a $k$-junta. Then, \Cref{alg:learn_junta} run with $N=\Omega(\poly(\log n,\exp(\lambda (d+k)),2^{d+k},\sigma^{-k},1/\delta))$ samples from $\D$, graph $G$ and appropriately chosen threshold $\tau$ will output a hypothesis $\hat{f}$ such that $\E_{(x,y)\sim \D}[\hat{f}(x)\neq y]=0$ with probability at least $1-\delta$ over the samples and smoothing of $\D$. 
\end{theorem}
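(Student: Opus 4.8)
The plan is to use the two structural claims about $I(i,\rho)$ to show that \Cref{alg:find_relevant_variables} recovers \emph{exactly} the set $R$ of relevant variables of $f$, and then to argue that the brute-force phase of \Cref{alg:learn_junta} reconstructs $f$ identically once the sample has hit every fixing of the coordinates in $R$. Throughout, set $\gamma=\tfrac12 \delta/k$ and choose the threshold $\tau=\tfrac12\gamma^{2}(\sigma\exp(-\lambda)/16)^{k+2}$, i.e.\ half of the lower bound promised by \Cref{claim:inf_lb_relevant}. By that claim together with a union bound over the at most $k$ indices of $R$, with probability $\ge 1-\delta/4$ over the smoothing every $i\in R$ has some restriction $\rho_i$ with $\supp(\rho_i)=N_G(i)$ and $|I(i,\rho_i)|\ge 2\tau$; by \Cref{claim:inf_zero} every $i\notin R$ has $I(i,\rho)=0$ for all restrictions $\rho$ of $N_G(i)$. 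Hence it suffices to show that, with probability $\ge 1-\delta/4$ over the $N$ samples, $|I_S(i,\rho)-I(i,\rho)|<\tau/2$ holds simultaneously for all $i\in[n]$ and all $2^{|N_G(i)|}\le 2^{d}$ restrictions of $N_G(i)$: in that event \texttt{FindRelevantVariables} outputs precisely $\rel=R$.

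For the concentration step the only subtlety is controlling the size of the subsample $S_{\rho}$ on which $I_S(i,\rho)$ is computed, which is exactly the cost of the rejection sampling. Since $\Dmarginal$ is $(\exp(-\lambda)/4)$-unbiased by \Cref{claim:smooth_unbiased}, \Cref{lem_unbiased_conditional} gives that the event $\{X_{N_G(i)}=\rho\}$ has probability at least $(\exp(-\lambda)/4)^{d}$, so a Chernoff bound yields $|S_\rho|\ge \tfrac12 N(\exp(-\lambda)/4)^{d}$ for all the $\le n2^{d}$ pairs $(i,\rho)$ with high probability, and conditioned on $S_\rho$ its elements are i.i.d.\ draws from $\D_\rho$. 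Each of the three empirical averages $\E_{S_\rho}[yx_i],\,\E_{S_\rho}[y],\,\E_{S_\rho}[x_i]$ is then a mean of quantities bounded in $[-1,1]$, so Hoeffding together with a union bound over the pairs $(i,\rho)$ shows that $N=\Omega\big((4\exp(\lambda))^{d}\,\tau^{-2}\log(n2^{d}/\delta)\big)$ samples make every such average within $\tau/8$ of its expectation, which (since the product of two $[-1,1]$-bounded averages is distorted by at most $3\tau/8$) forces $|I_S(i,\rho)-I(i,\rho)|<\tau/2$ everywhere. Substituting the chosen $\tau$ and $\gamma$ turns this into a bound of the form $\poly(\log n,\exp(\lambda(d+k)),2^{d+k},\sigma^{-k},1/\delta)$.

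Finally, condition on $\rel=R$. Because $|R|\le k$ and $\Dmarginal$ is $(\exp(-\lambda)/4)$-unbiased, \Cref{lem_unbiased_conditional} gives that each of the $2^{|R|}$ fixings of $X_R$ occurs with probability at least $(\exp(-\lambda)/4)^{k}$, so a coupon-collector union bound shows that with $N=\Omega\big((4\exp(\lambda))^{k}\log(2^{k}/\delta)\big)$ samples, with probability $\ge 1-\delta/4$ the sample $S$ contains every fixing of $X_R$. The ground-truth junta $f$ depends only on $R\subseteq\rel$, so it lies in the class searched in the second line of \Cref{alg:learn_junta} and has zero empirical error; hence the ERM $\hat f$ also has zero empirical error, and any function depending only on $R$ that agrees with $f$ on all $2^{|R|}$ fixings of $X_R$ equals $f$ identically. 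Therefore $\hat f=f$ and $\E_{(x,y)\sim\D}[\hat f(x)\neq y]=0$. Taking $N$ to be the maximum of the two polynomial bounds above and applying a union bound over the three bad events, each of probability $\le\delta/4$, completes the argument.

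The genuinely hard step is \Cref{claim:inf_lb_relevant}, the anticoncentration lower bound for relevant variables under a smooth MRF, since the MRF density is far messier than the product case sketched in \Cref{subsec:smooth_product}; but that claim is established separately, and within this theorem the only real care needed is in the accounting, namely fitting the rejection-sampling blow-up $\exp(\lambda d)$, the anticoncentration scale $(\sigma\exp(-\lambda))^{\Theta(k)}$, and the unbiasedness-driven coupon-collector cost $\exp(\lambda k)$ into one clean polynomial bound while keeping the total failure probability below $\delta$.
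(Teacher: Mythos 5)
Your proposal is correct and follows essentially the same route as the paper: it uses \Cref{claim:inf_zero} and \Cref{claim:inf_lb_relevant} together with a threshold at roughly half the anticoncentration lower bound to argue that \texttt{FindRelevantVariables} recovers exactly $R$ (via rejection-sampling-based concentration of the conditional empirical influences, exactly as in \Cref{claim:concentration_inf}), and then a coupon-collector argument driven by $\delta$-unbiasedness (\Cref{lem_unbiased_conditional}) to show the ERM over functions of $\rel$ reproduces $f$ exactly. The only differences are cosmetic: you union-bound the anticoncentration failure over only the $k$ relevant coordinates (taking $\gamma=\delta/(2k)$), whereas the paper is more conservative and absorbs an extra $2^d$ factor into $\gamma$; both yield the same $\poly(\log n,\exp(\lambda(d+k)),2^{d+k},\sigma^{-k},1/\delta)$ bound.
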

\subsection{Proofs}
\label{subsec:proofs}
We now prove the two claims (\Cref{claim:inf_zero} and \Cref{claim:inf_lb_relevant}). Recall that for each $i\in [n]$, we have that $f(x)=x_i\cdot g_i(x_{-i})+h_i(x_{-i})$. For any $\rho\in \{0,1,*\}^{n}$ with $\supp(\rho)=N_G(i)$, we have that 
    \begin{align*}
        \big|\E_{\D_{\rho}}[yx_{i}]&-\E_{\D_{\rho}}[y]\E_{\D_{\rho}}[x_{i}]\big|=\big|\E_{\D_{\rho}}[x_{i}]\E_{D_\rho}[f(x)\mid x_{i}=1]-\E_{\D_{\rho}}[f(x)]\E_{\D_{\rho}}[x_{i}]\big|\\
        &=\big|\E_{\D_{\rho}}[x_{i}]\big|\cdot\big|\E_{\D_{\rho}}[g_i(x_{-i})\mid x_{1}=1]\cdot(1-\E_{\D_{\rho}}[x_{i}])+\E_{\D_{\rho}}[h_i(x_{-i})\mid x_{i}=1]-\E_{\D_{\rho}}[h_i(x_{-i})]\big|\,.
    \end{align*}
    We now use the  Markov property. We have that for $(x,y)\sim \D_{\rho}$ with $\supp(\rho)=N_G(i)$, it holds that $x_{i}$ and $x_{-i}$ are independent. Thus, we have that $\E_{\D_{\rho}}[h_i(x_{-i})\mid x_{i}=1]=\E_{D_\rho}[h_i(x_{-i})]$. Thus, we obtain that
    \begin{align}
    \label{eqn:inf_bound}
   |I(i,\rho)|&=\big|\E_{\D_{\rho}}[x_{i}]\cdot(1-\E_{\D_{\rho}}[x_{i}])\big|\cdot\big|\E_{\D_{\rho}}[g_i(x_{-i})\mid x_{1}=1]\big|=\big|\E_{\D_{\rho}}[x_{i}]\cdot(1-\E_{\D_{\rho}}[x_{i}])\big|\cdot\big|\E_{\D_{\rho}}[g_i(x_{-i})]\big|
       \end{align}

Recall that for indices $i$ not relevant to $f$, we have that $g_i=0$. Thus, we obtain the following claim. 
\begin{claim}
    \label{claim:inf_zero}
        Let $i\in [n]$ be such that $f$ does not depend on index $i$. For any restriction $\rho$ with $\supp(\rho)=N_G(i)$, it holds that $I(i,\rho)=0$.
    \end{claim}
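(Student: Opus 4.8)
The plan is to reduce the claim directly to the identity \eqref{eqn:inf_bound} that has already been derived from the Markov property. Recall the unique multilinear decomposition $f(x)=x_i\cdot g_i(x_{-i})+h_i(x_{-i})$, in which $g_i(x_{-i})=f(x)|_{x_i=1}-f(x)|_{x_i=0}$ and $h_i(x_{-i})=f(x)|_{x_i=0}$. The first step is to observe that if $f$ does not depend on coordinate $i$, then $f(x)|_{x_i=1}=f(x)|_{x_i=0}$ for every $x_{-i}\in\cube{n-1}$, so $g_i$ is the identically-zero function. Consequently $\E_{\D_\rho}[g_i(x_{-i})]=0$ for every restriction $\rho$, and plugging this into \eqref{eqn:inf_bound} gives
\[
|I(i,\rho)|=\big|\E_{\D_{\rho}}[x_{i}]\cdot(1-\E_{\D_{\rho}}[x_{i}])\big|\cdot\big|\E_{\D_{\rho}}[g_i(x_{-i})]\big|=0 .
\]

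The only point that merits a line of justification is that the conditional distribution $\D_\rho$ is well defined, i.e.\ that the conditioning event $\{X_{j}=\rho_j\ \text{for all}\ j\in N_G(i)\}$ has positive probability; this is needed so that the expectations in the definition of $I(i,\rho)$ make sense. This follows from \Cref{claim:smooth_unbiased}, which shows the marginal MRF is $\frac{\exp(-\lambda)}{4}$-unbiased, together with \Cref{lem_unbiased_conditional}, which then bounds the probability of that event below by $(\exp(-\lambda)/4)^{|N_G(i)|}\ge(\exp(-\lambda)/4)^{d}>0$.

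\textbf{Main obstacle.} There is essentially no obstacle here: this is the ``easy half'' of the pair of claims, and it is an immediate corollary of the Markov-property computation leading to \eqref{eqn:inf_bound}. All of the real difficulty is deferred to the companion statement \Cref{claim:inf_lb_relevant}, where one must show the statistic $I(i,\rho)$ is bounded \emph{below} for some neighborhood restriction when $i$ is relevant; that requires controlling $\E_{\D_\rho}[g_i(x_{-i})]$ as a function of the random external-field perturbations and invoking the anticoncentration bound \Cref{lem:anticoncentration} for the (non-product) restricted MRF density, which is the technically involved part of the argument.
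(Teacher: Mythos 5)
Your proof is correct and follows exactly the paper's route: derive \eqref{eqn:inf_bound} from the Markov property, then note $g_i\equiv 0$ for irrelevant $i$, giving $I(i,\rho)=0$. The extra remark verifying that the conditioning event has positive probability (via \Cref{claim:smooth_unbiased} and \Cref{lem_unbiased_conditional}) is a small but reasonable addition the paper leaves implicit.
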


We are now ready to prove that for every relevant index $i$, there is a restriction $\rho$ of its neighbours such that $|I(i,\rho)|$ is sufficiently large. The first observation is that it suffices to bound $\E_{D_{\rho}}[g_i(x_{-i})]$ as the distribution $\Dmarginal$ is unbiased. After this the proof has two main parts. First, we show by writing out the densities and using appropriate algebraic manipulations that this quantity is lower bounded by a polynomial in the smoothing variables. Finally we use polynomial anticoncentration from \Cref{lem:anticoncentration} to complete the proof.  
\begin{claim}
    \label{claim:inf_lb_relevant}
        Let $i\in [n]$ be such that $f$ depends on index $i$. Then, there exists a restriction $\rho$ with $\supp(\rho)=N_G(i)$ such that 
        $|I(i,\rho)|\geq \gamma^2\cdot (\sigma\exp(-\lambda)/16)^{k+2})$ with probability at least $1-\gamma$ over the smoothing of $\Dmarginal$.
    \end{claim}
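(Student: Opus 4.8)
The plan is to peel off the factor $|\E_{\D_\rho}[x_i](1-\E_{\D_\rho}[x_i])|$ from \eqref{eqn:inf_bound} using unbiasedness, and then to reduce the anticoncentration of $\E_{\D_\rho}[g_i(x_{-i})]$ to a genuinely low-degree statement by exploiting that \emph{only the external field} is perturbed. By \eqref{eqn:inf_bound}, $|I(i,\rho)|=|\E_{\D_\rho}[x_i](1-\E_{\D_\rho}[x_i])|\cdot|\E_{\D_\rho}[g_i(x_{-i})]|$. A restriction of the $\tfrac{e^{-\lambda}}{4}$-unbiased distribution $\Dmarginal$ (\Cref{claim:smooth_unbiased}) is again $\tfrac{e^{-\lambda}}{4}$-unbiased (conditioning only raises conditional probabilities), so $\E_{\D_\rho}[x_i]\in[\tfrac{e^{-\lambda}}{4},1-\tfrac{e^{-\lambda}}{4}]$ and the first factor is at least $\tfrac{e^{-\lambda}}{8}$. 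It therefore suffices to exhibit $\rho$ with $\supp(\rho)=N_G(i)$ for which $|\E_{\D_\rho}[g_i(x_{-i})]|$ is large with probability $\ge 1-\gamma$. Since $i$ is relevant there is an assignment $a$ to $[n]\setminus\{i\}$ with $g_i(a)\neq 0$; I take $\rho$ to agree with $a$ on $N_G(i)$. Writing $B=[n]\setminus(\{i\}\cup N_G(i))$ and $S'=(R\setminus\{i\})\cap B$ (so $|S'|\le k-1$), the restricted function $g_i^\rho\colon\cube{S'}\to\{-1,0,1\}$ is not identically zero, and, since $x_{N_G(i)}$ is fixed under $\D_\rho$, $\E_{\D_\rho}[g_i(x_{-i})]=\E_{\D_\rho}[g_i^\rho(x_{S'})]$.

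The heart of the argument is an explicit computation of this expectation. Using $e^{\Delta_jx_j}=1+\alpha_jx_j$ on $\cube{}$, $\Pr_{\Dmarginal}[X=x]\propto e^{\bar\psi(x)}\prod_j(1+\alpha_jx_j)$; conditioning on $x_{N_G(i)}=\rho$ and summing out $x_i$ gives $\Pr_{\Dmarginal}[x_B=\xi\mid x_{N_G(i)}=\rho]\propto\big(\prod_{j\in B}(1+\alpha_j\xi_j)\big)\big(e^{\bar\psi(\xi,\rho,0)}+(1+\alpha_i)e^{\bar\psi(\xi,\rho,1)}\big)$. Writing $\xi=(s,\eta)$ with $s\in\cube{S'}$, $\eta\in\cube{B\setminus S'}$, this yields
\[
\E_{\D_\rho}[g_i^\rho(x_{S'})]=\frac{\sum_{s\in\cube{S'}} g_i^\rho(s)\,c_s(\alpha_{S'})\,\Phi_s}{\sum_{s\in\cube{S'}} c_s(\alpha_{S'})\,\Phi_s},\qquad c_s(\alpha_{S'})=\prod_{j\in S':\,s_j=1}(1+\alpha_j),
\]
where $\Phi_s=\Phi_s(\alpha_{\mathrm{rest}})=\sum_\eta\big(\prod_{j\in B\setminus S'}(1+\alpha_j\eta_j)\big)\big(e^{\bar\psi(s,\eta,\rho,0)}+(1+\alpha_i)e^{\bar\psi(s,\eta,\rho,1)}\big)>0$ depends only on the perturbations $\alpha_{\mathrm{rest}}:=(\alpha_j)_{j\in(B\setminus S')\cup\{i\}}$ and \emph{not} on $\alpha_{S'}$. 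The crucial point is that the dependence on the at most $k-1$ perturbations $\alpha_{S'}$ has factored out as the degree-$\le|S'|$ polynomials $c_s$.

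Now condition on $\alpha_{\mathrm{rest}}$, so that each $\phi_s:=\Phi_s(\alpha_{\mathrm{rest}})>0$ is a fixed constant. Then $\E_{\D_\rho}[g_i^\rho]=N(\alpha_{S'})/D(\alpha_{S'})$ with $N(\alpha_{S'})=\sum_{T\subseteq S'}g_i^\rho(\mathbf{1}_T)\,\phi_{\mathbf{1}_T}\prod_{j\in T}(1+\alpha_j)$ and $D(\alpha_{S'})=\sum_{T\subseteq S'}\phi_{\mathbf{1}_T}\prod_{j\in T}(1+\alpha_j)$, both multilinear of degree $\le|S'|\le k-1$ in $\alpha_{S'}$, with $D\le (1+\sigma)^{|S'|}\sum_T\phi_{\mathbf{1}_T}$. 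Because $g_i^\rho\not\equiv 0$ and all $\phi_s>0$, $N\not\equiv 0$: choosing $T^\star$ of maximum size with $g_i^\rho(\mathbf{1}_{T^\star})\neq 0$, viewing $N$ directly as a polynomial in $\alpha_{S'}$ shows its degree is exactly $|T^\star|$ and the coefficient of $\prod_{j\in T^\star}\alpha_j$ equals $g_i^\rho(\mathbf{1}_{T^\star})\phi_{\mathbf{1}_{T^\star}}$, of magnitude $\phi_{\mathbf{1}_{T^\star}}$ (as $g_i^\rho$ is $\{-1,0,1\}$-valued). The quantity $\phi_{\mathbf{1}_{T^\star}}/\sum_T\phi_{\mathbf{1}_T}$ is exactly $\Pr[x_{S'}=\mathbf{1}_{T^\star}]$ under the MRF obtained from $\bar\psi$ by zeroing the external-field perturbations on $S'$, conditioned on $x_{N_G(i)}=\rho$; this distribution is again $\tfrac{e^{-\lambda}}{4}$-unbiased (unbiasedness passes to conditionals and marginals), so by \Cref{lem_unbiased_conditional} that ratio is at least $(e^{-\lambda}/4)^{|S'|}$. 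Applying \Cref{lem:anticoncentration} to $N$ (with $\ell=|T^\star|$, $c=\phi_{\mathbf{1}_{T^\star}}$) and combining with the bound on $D$ gives: for every value of $\alpha_{\mathrm{rest}}$, with probability $\ge 1-2^{|T^\star|}\sqrt{\epsilon}$ over $\alpha_{S'}$, $|\E_{\D_\rho}[g_i(x_{-i})]|=|N|/D\ge (e^{-\lambda}/4)^{|S'|}(1+\sigma)^{-|S'|}\sigma^{|T^\star|}\epsilon\ge (\sigma e^{-\lambda}/8)^{k-1}\epsilon$. Averaging over $\alpha_{\mathrm{rest}}$, taking $\epsilon=(\gamma\,2^{-(k-1)})^2$, and folding in the $\tfrac{e^{-\lambda}}{8}$ factor from the first paragraph gives $|I(i,\rho)|\ge \gamma^2(\sigma e^{-\lambda}/O(1))^{k+2}$ with probability $\ge 1-\gamma$; the displayed constants $16$ and $k+2$ in the claim leave ample slack for this bookkeeping.

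The main obstacle is precisely Steps two and three, and it is where the restriction to perturbing the external field is used: a priori $\E_{\D_\rho}[g_i(x_{-i})]$ is a ratio of multilinear polynomials in $\Theta(n)$ perturbation variables, of degree up to $\Theta(n)$, for which \Cref{lem:anticoncentration} is vacuous. The resolution is that the perturbations of the (at most $k-1$) relevant coordinates lying outside $N_G(i)$ enter only through the multiplicative factors $c_s(\alpha_{S'})$, so after freezing the remaining perturbations one is left with a genuine degree-$(k-1)$ anticoncentration problem; and the leading coefficient produced, a ratio of the positive ``partition weights'' $\Phi_s$, is controlled by recognizing it as a marginal probability of an auxiliary unbiased MRF. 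Everything else — the unbiasedness reductions, the bound on $D$, the constant tracking, and the final choice of $\epsilon$ — is routine.
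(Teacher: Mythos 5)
Your argument is correct and follows essentially the same route as the paper: both proofs pick a restriction $\rho$ of $N_G(i)$ under which $(g_i)_\rho$ is nonzero, factor $\E_{\D_\rho}[g_i]$ into a multilinear polynomial in the perturbation variables of $T_i = S' := (R\setminus\{i\})\setminus N_G(i)$ with coefficients given by the marginals of an auxiliary MRF that drops those perturbations, lower-bound a top-degree coefficient by unbiasedness of that auxiliary MRF, and invoke \Cref{lem:anticoncentration}. What you carry out as ``condition on $\alpha_{\mathrm{rest}}$ and bound $N/D$'' is the same computation the paper performs via the density-ratio identity \eqref{eqn:ratio_1} and the crude normalization bound \eqref{eqn:ratio_2}; your explicit bound $D\le(1+\sigma)^{|S'|}\sum_T\phi_{\mathbf{1}_T}$ is just a slightly different way of controlling the same partition-function ratio.

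Two places where your write-up is in fact cleaner than the paper's: (i) you explicitly condition on the perturbations outside $S'$ before applying the anticoncentration lemma, whereas the paper leaves this implicit; and (ii) you choose $T^\star$ to be the maximum-cardinality set with $g_i^\rho(\mathbf{1}_{T^\star})\ne 0$ and verify that $N$ has degree \emph{exactly} $|T^\star|$ with leading coefficient $g_i^\rho(\mathbf{1}_{T^\star})\phi_{\mathbf{1}_{T^\star}}$, which is exactly what \Cref{lem:anticoncentration} needs; the paper's phrase ``maximal coefficient equal to $|\bar h(z)|$ for some $z$'' glosses over the fact that $\bar h(\mathbf 1)$ could vanish. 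One small bookkeeping caveat: your final bound is of the form $\gamma^2(\sigma e^{-\lambda}/C)^{k-1}\cdot e^{-\lambda}/8$ rather than the claimed $\gamma^2(\sigma e^{-\lambda}/16)^{k+2}$; these are not literally comparable for all parameter ranges (the exponent $k-1$ vs.\ $k+2$ and the base constant trade off in opposite directions), but both are of the form $\gamma^2(\sigma e^{-\lambda}/O(1))^{O(k)}$, which is all that \Cref{thm:relevant} and \Cref{thm:main} use, so the discrepancy is harmless.
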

    \begin{proof}
        First, from \Cref{claim:smooth_unbiased}, it holds that $\Dmarginal$ is $\frac{\exp(-\lambda)}{4}$-unbiased. Since $i\notin \supp(\rho)$, \Cref{lem_unbiased_conditional} implies that \begin{equation}\label{eqn:unbiased}\min(\E_{\D_{\rho}}[x_i],1-\E_{\D_{\rho}}[x_i])\geq \frac{\exp(-\lambda)}{4}.\end{equation}
    
        Thus, it suffices to lower bound $\E_{\D_{\rho}}[g_i(x_{-i})]$.
        Since $x_i$ is relevant, there must exist a restriction $\rho$ with $\supp(\rho)=N_G(i)$ such that the function $(g_i)_{\rho}$ is not the zero function (otherwise $f(x)=h_i(x_{-i})$ is not dependent on $x_i$). We consider such a restriction $\rho$. 
        
        Recall that $f(x)=x_i\cdot g_i(x_{-i})+h_i(x_{-i})$. Since $f$ is a function on $k$ variables, both $g_i$ and $h_i$ are polynomials such that any non-zero coefficient in both these functions is at least $2^{-k}$ in magnitude. Thus, we have that $|g_{i}(x)|\neq 0\implies |g_{i}(x)|\geq 2^{-k}$. 
    
        Let $R$ be the set of relevant variables of $f$. Let $R_i$ be $R\setminus \{i\}$. If $R_i\setminus N_{G}(i)=\phi$, then it holds that $(g_{i})_{\rho}$ is a constant function with magnitude greater than $2^{-k}$. Thus, combining with \Cref{eqn:inf_bound,eqn:unbiased}, we have that $|\E_{\D_{\rho}}[yx_i]-\E_{D_\rho}[y]\E_{\D_{\rho}}[x_i]|\geq 2^{-k}\cdot \frac{\exp(-2\lambda)}{16}$. Thus, it only remains to consider the case where $R_{i}\setminus N_{G}(i)\neq \phi$. Let $T_{i}=R_i\setminus N_G(i)$. Observe that \begin{equation}
        \label{eqn:g_def}
        (g_i)_{\rho}(x)=\sum_{z\in \cube{|T_i|}}\1\{x_{T_i}=z\}\cdot h(z)
        \end{equation}
        where $h(z)=0$ or $|h(z)|\geq 2^{-k}$.
    
        Since the marginal $\Dmarginal$ is $(\lambda,\sigma)$-smooth, $\Dmarginal$ has the factorization 
        $\psi(x)=\bar{\psi}(x)+\sum_{i=1}^{n}\Delta_ix_i$
        where $|\di\bar{\psi}(x)|\leq \lambda$ and $\Delta_i=\log(1+\alpha_i)$ for iid $\alpha_i\sim \unif{-\sigma,\sigma}$. Let $D_{i}$ be the MRF with factorization 
        $\psi_{i}(x)=\psi(x)-\sum_{j\in T_i}\Delta_jx_j$. We have that 
        \begin{align}
        \label{eqn:exp_g}
            \E_{\D_{\rho}}[g_i(x_{-i})]&=\sum_{z\in \cube{|T_i|}}\Pr_{\D_{\rho}}[x_{T_i}=z]\cdot h(z)\nonumber\\
            &=\sum_{z\in \cube{|T_i|}}\frac{\Pr_{\D_{\rho}}[x_{T_i}=z]}{\Pr_{(D_i)_{\rho}}[x_{T_i}=z]}\cdot \Pr_{(D_i)_{\rho}}[x_{T_i}=z]\cdot  h(z)
        \end{align}
    
    We now derive an expression for the density ratio. Note that the distribution $(D_i)_{\rho}$ does not depend on $\Delta_{T_i}$, by definition. Let $Q_i$ denote the set $N_{G}(i)\cup T_i$. For ease of notation, assume that the elements for $N_G(i)$ come before $T_i$. Let $w_{\rho}\coloneq \rho_{N_G(i)}$ be the string of fixed variables of $\rho$. We have that 
    
    \begin{align}
    \label{eqn:ratio_1}
    \frac{\Pr_{\D_{\rho}}[x_{T_i}=z]}{\Pr_{(D_i)_{\rho}}[x_{T_i}=z]}&=\frac{\sum_{x_{Q_i}=(w_{\rho},z)}\exp(\psi(x))}{\sum_{x_{N_G(i)}=w_{\rho}}\exp(\psi(x))}\cdot\frac{\sum_{x_{N_G(i)}=w_{\rho}}\exp(\psi_i(x))}{\sum_{x_{Q_i}=(w_{\rho},z)}\exp(\psi_i(x))}\nonumber\\
    &= \exp(\sum_{j\in T_i}\Delta_jz_j)\cdot \frac{\sum_{x_{N_G(i)}=w_{\rho}}\exp(\psi_i(x))}{\sum_{x_{N_G(i)}=w_{\rho}}\exp(\psi(x))}
    \end{align}
    
    The first equality follows from the expression of the densities of $\Dmarginal$ and $D_i$. The second inequality follows from the fact that $\psi_i(x)=\psi(x)-\sum_{j\in T_i}\Delta_jx_j$. Note that the second term in the last expression does not depend on $z$. We give a lower bound on this term .
    
    \begin{align}
    \label{eqn:ratio_2}
    \frac{\sum_{x_{N_G(i)}=w_{\rho}}\exp(\psi_i(x))}{\sum_{x_{N_G(i)}=w_{\rho}}\exp(\psi(x))}&\geq \min_{x_{N_G(i)}=w_{\rho}}\exp(\psi_i(x)-\psi(x))\nonumber\\
    &\geq  \min_{x_{N_G(i)}=w_{\rho}}\exp(-\sum_{j\in T_i}\Delta_jx_j)\geq 2^{-k}.
    \end{align}
    The first inequality follows from the mediant inequality. The second follows from the definition of $\psi_i$ and the last follows from the facts that (1) $\exp(\Delta_j)\leq (1+\sigma)\leq 2$, and (2) $|T_i|\leq k$.
    
    Now, combining \Cref{eqn:exp_g,eqn:ratio_1,eqn:ratio_2}, we obtain that
    \begin{align}
    \label{eqn:exp_g_final}
        |\E_{\D_{\rho}}[g_{i}(x_{-i})]|&=\Big|\sum_{z\in \cube{|T_i|}}\exp(\sum_{j\in T_i}\Delta_jz_j)\cdot \frac{\sum_{x_{N_G(i)}=w_{\rho}}\exp(\psi_i(x))}{\sum_{x_{N_G(i)}=w_{\rho}}\exp(\psi(x))}\cdot \Pr_{(D_i)_{\rho}}[x_{T_i}=z]\cdot h(z)\Big|\nonumber\\
        & =\Big|\frac{\sum_{x_{N_G(i)}=w_{\rho}}\exp(\psi_i(x))}{\sum_{x_{N_G(i)}=w_{\rho}}\exp(\psi(x))}\Big|\cdot \Big|\sum_{z\in \cube{|T_i|}}\exp(\sum_{j\in T_i}\Delta_jz_j)\cdot \Pr_{(D_i)_{\rho}}[x_{T_i}=z]\cdot h(z)\Big|\nonumber\\
        &\geq 2^{-k}\cdot  \Big|\sum_{z\in \cube{|T_i|}}\exp(\sum_{j\in T_i}\Delta_jz_j)\cdot \Pr_{(D_i)_{\rho}}[x_{T_i}=z]\cdot h(z)\Big|
    \end{align}
    
    Define $\bar{h}$ to be the function $\bar{h}(z)\coloneq \Pr_{(D_i)_{\rho}}[x_{T_i}=z]\cdot h(z)$. Note that $\bar{h}$ does not depend on $\Delta_{T_i}$ by definition of $D_i$. Combining the fact that $h(z)=0$ or $|h(z)|\geq 2^{-k}$ and \Cref{lem_unbiased_conditional}, we observe that $\bar{h}(z)=0$ or $|\bar{h}(z)|\geq 2^{-k}\cdot \frac{\exp(-\lambda |T_i|)}{4^{|T_i|}}\geq \frac{\exp(-\lambda k)}{8^k}$.
        
    Recall that $\Delta_i=\log (1+\alpha_i)$ for iid $\alpha_i\sim \unif{-\sigma,\sigma}$. We obtain that with probability at least $1-\gamma$ over the choice of $\alpha\sim \mathrm{Unif}([-\sigma,\sigma]^n)$, it holds that
    \begin{align*}
        |\E_{\D_{\rho}}[g_{i}(x_{-i})]|\geq 2^{-k}\cdot \Big|\sum_{z\in \cube{|T_i|}}\prod_{j\in T_i}(1+\alpha_iz_i)\cdot \bar{h}(z)\Big|\geq \gamma^2\cdot (\exp(-\lambda)\sigma/16)^k
    \end{align*}
    The first inequality follows from the way $\Delta$ is sampled. The second inequality follows from \Cref{lem:anticoncentration} using the facts that (1) the expression on its left hand side is a multilinear polynomial in $\alpha$ of degree at most $k$ with maximal coefficient equal to  $|\bar{h}(z)|\geq \frac{\exp(-\lambda k)}{8^k}$ for some $z$, and (2) $\alpha_i$ are sampled iid and uniformly from $[-\sigma,\sigma]$. 
    
    Thus, combining everything together, we obtain that for any relevant variable $i$, with probability $1-\gamma$ over the smooth marginal, there exists a restriction $\rho\in \{0,1,*\}^n$ such that $\supp(\rho)=N_G(i)$ and it holds that 
    \[
    |\E_{\D_{\rho}}[yx_i]-\E_{D_\rho}[y]\E_{\D_{\rho}}[x_i]|\geq  \gamma^2\cdot (\exp(-\lambda)\sigma/16)^{k+2}.
    \]
    \end{proof}

\bibliographystyle{alpha}
\bibliography{refs}
    \newpage
    \appendix
    \section{Omitted Proofs from \Cref{sec:prelims}}
    \begin{claim}
        Let $\lambda\in \R$. Let $D$ be an MRF with factorization polynomial $\psi$ such that for all $i\in [n]$, it holds that $\norm{\di\psi}_1\leq \lambda$. Then, $D$ is $\frac{\exp(-\lambda)}{2}$-unbiased.
    \end{claim}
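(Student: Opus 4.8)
The plan is to reduce the claim to a uniform bound on the magnitude of the discrete derivative $\di\psi$ over the hypercube. By \Cref{fact:pdf_ising}, for every $i \in [n]$ and every $x \in \cube{n-1}$ we have $\Pr_{X\sim D}[X_i = 1 \mid X_{-i}=x] = \sigma(\di\psi(x))$, and therefore $\Pr_{X\sim D}[X_i = 0 \mid X_{-i}=x] = 1 - \sigma(\di\psi(x)) = \sigma(-\di\psi(x))$. Hence it suffices to lower bound $\sigma(\pm\di\psi(x))$, which in turn reduces to upper bounding $|\di\psi(x)|$ uniformly over $x$.

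First I would argue that $|\di\psi(x)| \le \lambda$ for all $x$ in the hypercube. Since $\psi$ is multilinear, $\di\psi$ is the polynomial $\di\psi(x) = \sum_{S \ni i}\wh\psi(S)\prod_{j\in S\setminus\{i\}}x_j$, and the quantity $\norm{\di\psi}_1 \le \lambda$ is exactly the $\ell_1$ norm of its coefficient vector. Each monomial $\prod_{j\in S\setminus\{i\}}x_j$ has absolute value at most $1$ on the hypercube (true for the $\{0,1\}$ encoding used here, and equally for the $\{\pm1\}$ encoding), so by the triangle inequality $|\di\psi(x)| \le \sum_{S\ni i}|\wh\psi(S)| = \norm{\di\psi}_1 \le \lambda$. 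We may assume $\lambda \ge 0$, since otherwise no distribution satisfies the hypothesis and the statement is vacuous.

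Finally I would invoke the elementary inequality $\sigma(z) \ge \tfrac12 e^{-|z|}$, which holds because $\sigma(z) \ge \tfrac12 \ge \tfrac12 e^{-z}$ for $z \ge 0$ and $\sigma(z) = e^{z}/(1+e^{z}) \ge \tfrac12 e^{z}$ for $z \le 0$. Applying this with $z = \di\psi(x)$ and with $z = -\di\psi(x)$, both of magnitude at most $\lambda$, yields $\Pr_{X\sim D}[X_i = b \mid X_{-i}=x] \ge \tfrac12 e^{-\lambda}$ for both $b \in \cube{}$, which is precisely $\tfrac12 \exp(-\lambda)$-unbiasedness. There is no substantive obstacle here; the only points requiring care are identifying the discrete derivative $\di\psi$ with the formal partial derivative of $\psi$ (legitimate since $\psi$ is multilinear and $x_i$ occurs linearly in it) and checking the two-sided sigmoid bound.
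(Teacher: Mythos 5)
Your proof is correct and follows the same route as the paper's: invoke \Cref{fact:pdf_ising} to write the conditional probability as $\sigma(\pm\di\psi(x))$, bound $|\di\psi(x)|\le\lambda$, and apply the sigmoid lower bound $\sigma(z)\ge\tfrac12 e^{-|z|}$. The paper states the conclusion more tersely, leaving the triangle-inequality step and the sigmoid inequality implicit; you simply fill in those details.
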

    \begin{proof}
        From, \Cref{fact:pdf_ising}, we have that for all $i\in [n]$ and $x\in \cube{n-1}$, it holds that $\Pr_{X\sim D}[X_i=1\mid X_{-i}=x]=\sigma(\di\psi(x))\geq (1/2)\cdot \exp({-\lambda})$. Similarly, it holds that $\Pr_{X\sim D}[X_i=0\mid X_{-i}=x]=1-\sigma(\di\psi(x))\geq (1/2)\cdot \exp({-\lambda})$
    \end{proof}
 
    \begin{proof}[Proof of \Cref{lem_unbiased_conditional}]
        \label{proof:lem_unbiased_conditional}
        From the definition of conditional probability, for any $i\in [n]$, set $R\in [n]$, $b\in \{0,1\}$ and $r\in \cube{|R|}$, it holds that 
        \begin{align}
        \label{eqn:ub}
        \Pr_{X\sim D}[X_i=b\mid X_R=r]
        &=\sum_{q\in \cube{n-1-|R|}}\Pr_{X\sim D}[X_i=b\mid X_{-i}=(r,q)]\cdot \Pr_{X\sim D}[X_{-i}=(r,q)\mid X_{R}=r]\nonumber\\
        &\geq \delta
        \end{align}
        where the first equality follows from expanding the probability and the second inequality follows from the fact that $D$ is $\delta$-unbiased. In the first inequality, we assumed for ease of notation that the elements of $R$ occur before the elements of $[n]\setminus (R\cup \{i\})$. 
    
        Without loss of generality, assume that $T=[|T|]$. Then, we have that 
        \[
        \Pr_{X\sim D}[X_{T}=t\mid X_S=s]=\prod_{i\in [|T|]}\Pr_{X\sim D}[X_i=t_i\mid X_{[i-1]}=t_{[i-1]},X_S=s]\geq \delta^{|T|}
        \]
        where the last inequality follows from \Cref{eqn:ub}. 
    \end{proof}
    Finally, we show that the smooth MRFs we consider in this paper are unbiased.
   
    \begin{proof}[Proof of \Cref{claim:smooth_unbiased}]
        \label{proof:smooth_unbiased}
        Let the factorization of $D$ be $\psi(x)=\bar{\psi}(x)+\sum_{i=1}^{n}\Delta_ix_i$. We have that \begin{align*}\Pr_{X\sim D}[X_i=1\mid X_{-i}=x]&=\sigma(\di \psi(x))=\frac{\exp({\di \psi(x)})}{1+\exp({\di \psi(x)})}\geq \frac{\exp(-|\di\psi(x)|)}{2}\\
        &\geq \frac{\exp(-|\di\bar{\psi}(x)|)\exp(-|\Delta_i|)}{2}\geq \frac{\exp(-\lambda)\cdot \min(\exp(-\Delta_i),\exp(\Delta_i))}{2}\\
        &\geq \frac{\exp(-\lambda)(1-\sigma)}{2}\geq \frac{\exp(-\lambda)}{4}
        \end{align*}
        where we used the definition of $(\sigma,\lambda)$-smooth MRF in the last three inequalities.
    \end{proof}
    \section{Proofs from \Cref{subsec:proofs}}
We show that with high probability over the smoothing of the distribution, \Cref{alg:find_relevant_variables} run on a sufficiently large number of samples will find all the variables participating in the junta. 
\begin{theorem}
\label{thm:relevant}
Let $\D$ be a labelled distribution over $\cube{n}\times \cube{}$ such that $\Dmarginal$ is a $(\lambda,\sigma)$-smooth MRF with dependency graph $G$ of degree at most $d$ and the labelling function is a $k$-junta. Then, \Cref{alg:find_relevant_variables} run with $N=\Omega(\poly(\log n,\exp(\lambda (d+k)),2^{d+k},\sigma^{-k},1/\delta)$ samples from $\D$, graph $G$ and appropriately chosen threshold $\tau$ will find the relevant variables of $f$ with probability at least $1-\delta$ over the samples and smoothing of $\D$. 
\end{theorem}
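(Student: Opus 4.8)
The plan is to combine the two structural claims, \Cref{claim:inf_zero} and \Cref{claim:inf_lb_relevant}, with routine empirical-concentration arguments; the only non-routine twist is that the statistic $I(i,\rho)$ lives on the conditional distribution $\D_\rho$, so it must be estimated by rejection sampling. First I would fix the free parameter in \Cref{claim:inf_lb_relevant} to $\gamma \coloneqq \delta/(2k)$ and union-bound that claim over the at most $k$ relevant variables of $f$: with probability at least $1-\delta/2$ over the smoothing of $\Dmarginal$, every relevant index $i$ admits a restriction $\rho^{(i)}$ with $\supp(\rho^{(i)})=N_G(i)$ and $|I(i,\rho^{(i)})| \ge 2\tau$, where I set the threshold to $\tau \coloneqq \tfrac12\,\gamma^2(\sigma\exp(-\lambda)/16)^{k+2}$. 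By \Cref{claim:inf_zero}, every irrelevant index has $I(i,\rho)=0$ for all $\rho$ with $\supp(\rho)=N_G(i)$. Hence it suffices to show that, with probability at least $1-\delta/2$ over the samples, the empirical statistic $I_S(i,\rho)$ of \Cref{alg:find_relevant_variables} satisfies $|I_S(i,\rho)-I(i,\rho)| \le \tau/2$ simultaneously for all $i\in[n]$ and all $2^{|N_G(i)|}\le 2^d$ relevant restrictions $\rho$: then a relevant $i$ passes the threshold test on $\rho^{(i)}$ (since $|I_S|\ge 3\tau/2>\tau$) and no irrelevant $i$ ever passes it (since $|I_S|\le\tau/2<\tau$), so $\rel = R$.

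For the estimation step, fix a pair $(i,\rho)$ with $\supp(\rho)=N_G(i)$. The set $S_\rho$ consists exactly of the members of $S$ agreeing with $\rho$ on $N_G(i)$, and conditionally these are i.i.d.\ draws from $\D_\rho$. By \Cref{claim:smooth_unbiased}, $\Dmarginal$ is $\tfrac{\exp(-\lambda)}{4}$-unbiased, so \Cref{lem_unbiased_conditional} (with $S=\phi$, $T=N_G(i)$) gives $\Pr_{\Dmarginal}[x_{N_G(i)}=\rho_{N_G(i)}] \ge (\exp(-\lambda)/4)^{|N_G(i)|} \ge (\exp(-\lambda)/4)^d$. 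A multiplicative Chernoff bound then shows that if $N \ge C\,(4\exp(\lambda))^d\cdot m$ for a large absolute constant $C$, then $|S_\rho| \ge m$ except with probability $\delta/(4n2^d)$; union-bounding over the at most $n2^d$ pairs, all the relevant conditional sample sets have size at least $m$ with probability at least $1-\delta/4$.

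Conditioned on $|S_\rho|\ge m$, apply Hoeffding to each of the three bounded empirical averages $\E_{S_\rho}[yx_i]$, $\E_{S_\rho}[y]$, $\E_{S_\rho}[x_i]$: each is within $\tau/12$ of its $\D_\rho$-mean except with probability at most $\delta/(12n2^d)$, provided $m = \Omega(\tau^{-2}\log(n2^d/\delta))$. Since $I_S(i,\rho)$ and $I(i,\rho)$ are the same fixed Lipschitz function (a product and a difference) of these three quantities, each of absolute value at most $1$, this forces $|I_S(i,\rho)-I(i,\rho)| \le \tau/2$. Union-bounding over the at most $n2^d$ pairs $(i,\rho)$ and collecting the three failure events ($\delta/2$ from the smoothing, $\delta/4$ from the rejection-sampling yield, $\delta/4$ from Hoeffding) gives overall success probability at least $1-\delta$. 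Tracing parameters, $\tau^{-2} = \poly(k,1/\delta,\exp(\lambda k),2^k,\sigma^{-k})$, hence $m = \poly(\log n, \exp(\lambda k), 2^{d+k}, \sigma^{-k}, 1/\delta)$ and $N = \poly(\log n, \exp(\lambda(d+k)), 2^{d+k}, \sigma^{-k}, 1/\delta)$, matching the stated bound, with the $\log n$ arising only from the union bound over the $n2^d$ index/restriction pairs.

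The main obstacle is the interplay between the rejection-sampling overhead and the union bound: each conditioning event has probability only $(\exp(-\lambda)/4)^d$, so we must pay a multiplicative $(4\exp(\lambda))^d$ factor to guarantee enough effective samples, and we must charge the $n2^d$ count for concentration only inside a logarithm rather than as a polynomial-in-$n$ factor. This is precisely why it is essential to union-bound \Cref{claim:inf_lb_relevant} over the $\le k$ relevant variables alone (so that $\gamma$, and therefore $\tau$, stays independent of $n$) and to keep the $n2^d$ term confined to the Chernoff/Hoeffding log-terms. Everything else is standard concentration bookkeeping.
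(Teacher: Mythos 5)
Your proof is correct and follows essentially the same route as the paper: union-bound \Cref{claim:inf_lb_relevant} over the relevant variables to handle the smoothing randomness, invoke \Cref{claim:inf_zero} for the irrelevant ones, estimate each $I(i,\rho)$ by a two-stage Chernoff-then-Hoeffding argument on the rejection-sampled conditional sets $S_\rho$, and union-bound over the $n2^d$ index/restriction pairs. Your parameter bookkeeping is if anything slightly cleaner than the paper's, since you correctly note that the smoothing union bound need only run over the $\leq k$ relevant variables (the paper folds in the $2^d$ restrictions as well, which is harmless but unnecessary given the existential form of \Cref{claim:inf_lb_relevant}).
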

\begin{proof}
We choose the thresholds $\tau$ such that $2\tau=(\delta/(k2^{d}))^2\cdot (\sigma\exp(-\lambda)/16)^{k+2}$. Let $f$ be the $k$-junta generating the labels. Fix a variable $i\in [n]$. We analyze the quantity from \Cref{lineno:influence}. Recall from \Cref{claim:inf_lb_relevant} that with probability $1-\delta/2$ over the smoothing of $\Dmarginal$, for all coordinates $i$ that are relevant to $f$, there exists a restriction $\rho$ with $\supp(\rho)=N_G(i)$ such that $|I(i,\rho)|\geq 2\tau$. Moreover, from \Cref{claim:inf_zero}, we have that $I(i,\rho)=0$ for any $i$ that is not relevant.
Since $|N_G(i)|\leq d$, for all $i\in [n]$, the total number of restrictions enumerated in \Cref{lineno:loop_restriction} is at most $2^{d}$ for each $i$. Thus, taking a union bound over all relevant variables and restrictions of their neighbours, \Cref{claim:inf_lb_relevant} implies that with probability at least $1-\delta/2$ over the smoothing of $\Dmarginal$, it holds that for all $i\in [n]$ that are relevant, there exists a restriction $\rho$ with $\supp(\rho)=N_G(i)$ such that 
\begin{equation}
    |I(i,\rho)|\geq \delta^2\cdot (\sigma\exp(-\lambda)/16)^{k+2}/(k2^d)=2\tau.
\end{equation}
We use concentration of measure to bound the number of samples $N=|S|$ required such that with probability $1-\delta/2$ over $S\sim \D^{\otimes N}$, it holds that $|I_S(i,\rho)-I(i,\rho)|\leq \tau$ for all $i\in [n]$ and restrictions $\rho$ with $\supp(\rho)=N_G(i)$. We prove the following claim.
\begin{claim}
\label{claim:concentration_inf}
Let $i\in [n]$ and $\rho\in \{0,1,*\}^n$ with $|\supp(\rho)|\leq d$. Then for $N\geq \Omega((1/\tau^2)\cdot \poly(2^d,\exp(\lambda d))\cdot \log(1/\gamma))$, it holds that $|I_S(i,\rho)-I(i,\rho)|\leq \tau$ with probability at least $1-\gamma$ over $S\sim \D^{\otimes N}$.    
\end{claim}
\begin{proof}
Let $T=\supp(\rho)$ and $w_\rho=\rho_T$. From \Cref{lem_unbiased_conditional}, it holds that $\Pr_{X\sim \Dmarginal}[X_{T}=w_{\rho}]\geq (\exp(-\lambda)/4)^{d}$. Thus, for $N\geq \poly(2^d,\exp(\lambda d))\cdot \log(1/\gamma)$,  Hoeffding's inequality implies that with probability $1-\gamma$ over $S\sim D^{\otimes N}$, we have that $|S_\rho|\geq N\cdot (\exp(-\lambda)/8)^d$. Observe that the distribution of the set $S_{\rho}$ is identical to a sample from $\D_{\rho}$ of size $N_{\rho}=|S_{\rho}|$. Again, from Hoeffding's inequality, choosing $N_\rho\geq \Omega((1/\tau^2)\cdot \log(1/\gamma))$, it holds that (1) $|\E_{S_\rho}[x_i]-\E_{\D_{\rho}}[x_i]|\leq \tau/10$, (2) $|\E_{S_\rho}[y]-\E_{\D_{\rho}}[y]|\leq \tau/10$ and (3) $|\E_{S_\rho}[yx_i]-\E_{\D_{\rho}}[yx_i]|\leq \tau/10$ with probability $1-\delta$ over $S_\rho$. Combining these three inequalities, we obtain that $|I_S(i,\rho)-I(i,\rho)|<\tau$ with probability $1-\delta$ over $S_{\rho}$. Choosing $N\geq \Omega((1/\tau^2)\cdot \poly(2^d,\exp(\lambda d))\cdot \log(1/\gamma))$ is sufficient for $N_{\rho}$ to be large enough with high probability. 
\end{proof}
Setting $\gamma=\delta/(2n2^d)$ in the above claim and taking a union bound over all indices and their corresponding restrictions, we obtain that for $N\geq \Omega(\poly(\log n,\exp(\lambda (d+k)),2^{d+k},\sigma^{-k},1/\delta))$, with probability at least $1-\delta$ over the smoothing of $\Dmarginal$ and the sample $S\sim \D^{\otimes N}$, for all $i\in [n]$ and restrictions $\rho$ such that $\supp(\rho)=N_G(i)$, it holds that $|I(i,\rho)-I_S(i,\rho)|<\tau$. In the case of this event, \Cref{alg:find_relevant_variables} succesfully finds all the relevant variables. 

\end{proof}

Finally, we give the complete proof of the main theorem.
\begin{proof}[Proof of \Cref{thm:main}]
\label{proof:main_thm}
  From \Cref{thm:relevant}, we have that \Cref{lineno: rel} run with appropriate parameters succeeds in finding the relevant variables with probability at least $1-\delta/2$. Now, using a standard concentration arguments (similar to \Cref{claim:concentration_inf}) and taking a union bound over all fixings of the relevant variables, we have that $S$ contains a sample consistent with each assignment of the relevant variables. Thus, the ERM hypothesis $\hat{f}$ will necessarily agree with the true labelling function $f$ on all inputs. Note that computing the ERM is trivial in this case as the subset of the sample containing the different assignments of the relevant variables immediately yields the truth table of the function. 
\end{proof}

    
\end{document}